\documentclass{article}
\usepackage{lmodern}
\usepackage{xcolor}
\usepackage{arxiv}
\usepackage{authblk}
\usepackage{enumitem} %
\usepackage[titletoc]{appendix}
\usepackage{minitoc}
\usepackage{amsmath, amssymb, amsthm}
\usepackage{mdframed}
\usepackage{physics}
\usepackage[T1]{fontenc}
\usepackage{float} %
\usepackage{algorithm2e}
\RestyleAlgo{ruled}
\usepackage{graphicx}
\usepackage{wrapfig}
\makeatletter

\renewcommand{\@algocf@capt@plain}{above}%
\makeatother
\usepackage{subcaption}
\usepackage{hyperref}
\usepackage{xcolor}         %
\hypersetup{colorlinks=true,allcolors=blue}

\newcommand\blfootnote[1]{%
  \begingroup
  \renewcommand\thefootnote{}%
  \footnotetext{#1}%
  \endgroup
}
\newtheorem{theorem}{Theorem}

\newtheorem{corollary}{Corollary}

\newtheorem{lemma}{Lemma}
\newtheorem*{theorem*}{Theorem}
\newtheorem*{corollary*}{Corollary}

\usepackage[T1]{fontenc}
\def\bw{\bar{\mathbf{w}}}
\def\la{\langle

}

\usepackage{parskip}
\def\ra{\rangle}
\def\v{{\mathbf{v}}}

\def\x{{\mathbf{x}}}
\def\w{{\mathbf{w}}}
\def\m{{\mathbf{m}}}

\def\H{{\mathbf{H}}}

\def\M{{\mathbf{M}}}
\def\D{{\mathbf{D}}}
\def\Q{{\mathbf{Q}}}

\def\H{{\mathbf{H}}}
\def\I{{\mathbf{I}}}

\def\risk{\mathcal{R}}
\def\bias{\mathtt{bias}}
\def\variance{\mathtt{variance}}

\def\bm{\widetilde{\m}}
\def\vm{\overline{\m}}

\def\diag{\text{diag}}

\def\LLambda{{\mathbf{\Lambda}}}

\def\SSigma{\mathbf{\Sigma}}
\def\N{\mathcal{N}}
\def\tr{\text{Tr}}

\graphicspath{{../figures/}}
\graphicspath{{../.}}

\usepackage{amsmath,amsfonts,bm}

\def\eqref#1{equation~\ref{#1}}

\def\1{\mathbf{1}}

\def\ra{{\textnormal{a}}}

\def\vm{{\bm{m}}}

\DeclareMathAlphabet{\mathsfit}{\encodingdefault}{\sfdefault}{m}{sl}
\SetMathAlphabet{\mathsfit}{bold}{\encodingdefault}{\sfdefault}{bx}{n}

\newcommand{\E}{\mathbb{E}}

\newcommand{\R}{\mathbb{R}}

\def\bw{\bar{\mathbf{w}}}
\def\la{\langle}
\def\ra{\rangle}
\def\v{{\mathbf{v}}}

\def\x{{\mathbf{x}}}
\def\w{{\mathbf{w}}}
\def\m{{\mathbf{m}}}
\def\H{{\mathbf{H}}}

\def\M{{\mathbf{M}}}
\def\D{{\mathbf{D}}}
\def\Q{{\mathbf{Q}}}

\def\H{{\mathbf{H}}}

\def\risk{\mathcal{R}}
\def\bias{\mathtt{bias}}
\def\variance{\mathtt{variance}}

\def\bm{\widetilde{\m}}
\def\vm{\overline{\m}}

\def\LLambda{{\mathbf{\Lambda}}}

\def\SSigma{\mathbf{\Sigma}}
\def\N{\mathcal{N}}
\def\tr{\text{Tr}}
\def\I{\mathbf{I}}

\title{Anytime Pretraining:\\ Horizon-Free Learning-Rate Schedules with Weight Averaging}

\author[*,1,2]{Alexandru Meterez}
\author[*,1,2]{Pranav Ajit Nair}
\author[*,1,2]{Depen Morwani}
\author[1,2]{\\ Cengiz Pehlevan}
\author[1,2]{Sham Kakade}

\affil[1]{Harvard University}
\affil[2]{Kempner Institute at Harvard University}

\theoremstyle{plain}

\def\bw{\bar{\mathbf{w}}}
\def\la{\left\langle}
\def\ra{\right\rangle}
\def\x{{\mathbf{x}}}
\def\w{{\mathbf{w}}}
\def\m{{\mathbf{m}}}
\def\H{{\mathbf{H}}}
\def\Q{{\mathbf{Q}}}
\def\M{{\mathbf{M}}}
\def\I{{\mathbf{I}}}
\def\diag{{\text{diag}}}

\def\risk{\mathcal{R}}
\def\bias{\mathtt{bias}}
\def\variance{\mathtt{variance}}

\def\cov{\text{Cov}}

\def\bm{\widetilde{\m}}
\def\vm{\overline{\m}}
\def\E{\mathbb{E}}
\def\LLambda{{\mathbf{\Lambda}}}

\def\SSigma{\mathbf{\Sigma}}
\def\N{\mathcal{N}}
\def\R{\mathbb{R}}
\begin{document}
\setlength{\parindent}{0pt}

\maketitle
\begin{abstract}
    Large language models are increasingly trained in continual or open-ended settings, where the total training horizon is not known in advance. Despite this, most existing pretraining recipes are not anytime: they rely on horizon-dependent learning rate schedules and extensive tuning under a fixed compute budget. In this work, we provide a theoretical analysis demonstrating the existence of anytime learning schedules for overparameterized linear regression, and we highlight the central role of weight averaging—also known as model merging—in achieving the minimax convergence rates of stochastic gradient descent. We show that these anytime schedules polynomially decay with time, with the decay rate determined by the source and capacity conditions of the problem. Empirically, we evaluate 150M and 300M parameter language models trained for up to 32$\times$ Chinchilla scale, comparing constant and $1/\sqrt{t}$ schedules with weight averaging against a well-tuned cosine schedule. Across the full training range, the anytime schedules achieve comparable final loss to cosine decay. Taken together, our results suggest that weight averaging combined with simple, horizon-free step sizes offers a practical and effective anytime alternative to cosine learning rate schedules for large language model pretraining.
\end{abstract}

\section{Introduction}
\blfootnote{\hspace{-6mm}${}^*$: Equal contribution. \\ Correspondence to:  \texttt{ameterez@g.harvard.edu, pranavnair@g.harvard.edu} \\ Code is available at \url{https://github.com/alexandrumeterez/anytime-training}.}

\label{sec:introduction}
Since its introduction in the seminal paper by~\citet{loshchilov2016sgdr}, cosine decay has become the standard learning rate scheduler for large language model (LLM) pretraining, being used by practitioners for training frontier models across a wide range of scales and architectures~\citep{dubey2024llama,olmo2025olmo,liu2024deepseek}. Despite its widespread adoption, cosine decay is a horizon dependent scheduler, since it requires knowing the length of the training run ahead of time, making it ill suited for continual learning setups in which data is continuously received and mixed into the pretraining run. 

Recent work~\citep{hu2024minicpm,wen2024understanding} has proposed the warmup-stable-decay (WSD) schedule as a nearly-anytime alternative to cosine decay, achieving competitive performance~\citep{team2025kimi}. WSD consists of $3$ phases, namely a linear learning rate warm-up phase up to a maximum learning rate, then keeping the learning rate constant until $90\%$ of the training length, followed by decaying the learning rate for the rest of the $10\%$.
\begin{wrapfigure}{r}{0.50\textwidth}
  \centering
  \includegraphics[width=0.50\textwidth]{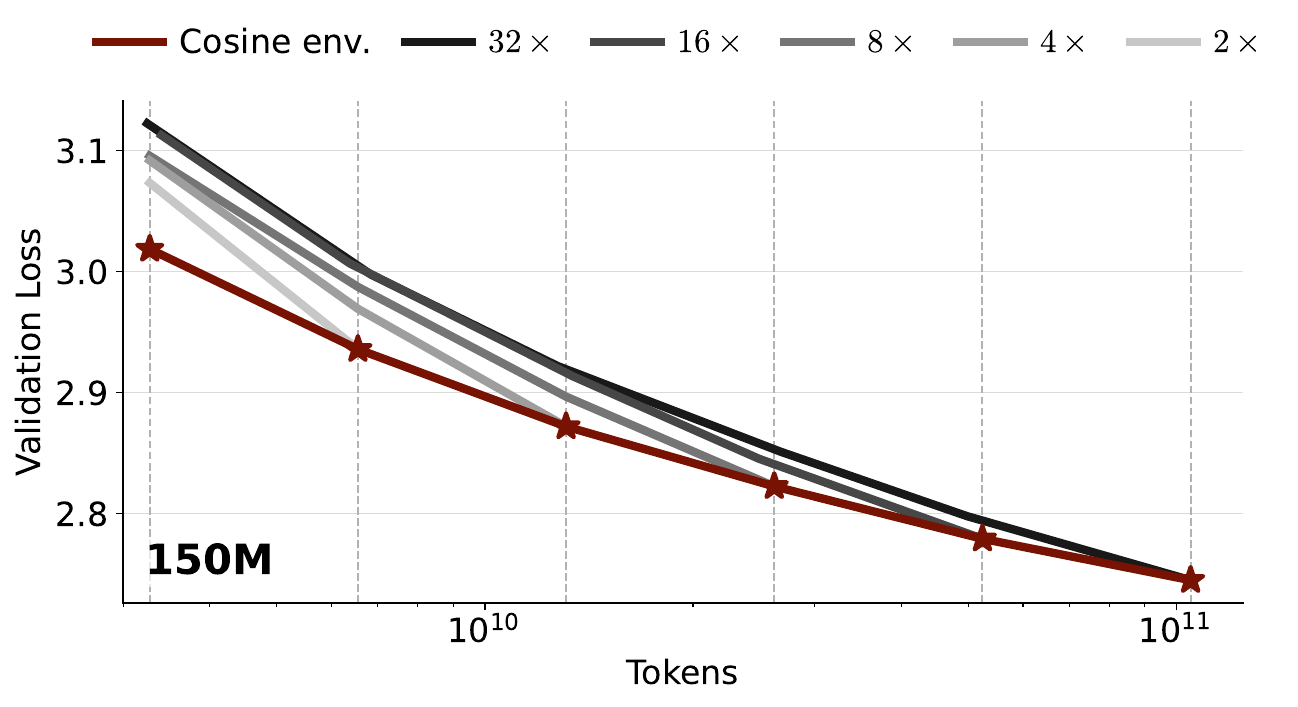}
  \caption{150M model: \textbf{Cosine schedules do not transfer across horizons.} The \emph{cosine envelope} (red) is formed by independently tuning a horizon-aware cosine schedule for each terminal compute budget ($1\times$--$32\times$ Chinchilla) and taking the best validation loss at that horizon. Gray curves show the same cosine schedule evaluated at intermediate checkpoints when tuned for a single fixed terminal budget. The gap illustrates why cosine decay is not anytime: tuning for a long horizon can be far from optimal at shorter budgets. An analogous plot for the 300M model appears in Figure~\ref{fig:cos-transfer-300m} (Appendix~\ref{app:additional_figures}).}
  \label{fig:cosine_envelope}
\end{wrapfigure}
WSD is not a strictly anytime schedule, since it requires additional overhead in the form of storing checkpoints of the training run at sparse intervals, from which one can either extend the constant learning rate phase in case of new data, or decay in order to get the final performance. In the case of continued training after decaying, the FLOPs spent on the decay part of the scheduler need to be charged again, introducing additional cost. From a theory point of view, polynomial learning rate schedules are common in the stochastic optimization literature~\citep{shamir2013stochastic,
dieuleveut2016nonparametric,wu2022last}, and carefully designed horizon-aware
decay sequences exist for least squares~\citep{ge2019step,jain2019making}. However, unless weight averaging is used, these schedules generally do not
achieve optimal rates for SGD in linear
regression~\citep{ge2019step, wu2022last}.
In this work we investigate theoretically in infinite dimensional Gaussian linear regression, and empirically by training LLMs for up to 32$\times$ Chinchilla tokens, how $1/t^\gamma$ with iterate averaging,
constant learning rate with iterate averaging and WSD compare to cosine decay in long training runs,
and which of these schedules provides a viable anytime
alternative to cosine annealing. Concretely, we require two properties from an anytime scheduler: (i) the scheduler should not depend on token horizon, and (ii) for any intermediate duration $T$, it should be
competitive with a well-tuned cosine schedule run for $T$ steps, i.e. it
should track the \emph{cosine envelope} that these tuned cosine schedules
define across checkpoints in a long run. Prior work has compared schedules
across multiple training durations: \citet{hagele2024scaling} tune cosine
baselines separately per duration and compare cooldown runs against them, and
\citet{hu2024minicpm} report similar multi-horizon comparisons. In our evaluation, we ask whether a \emph{single fixed horizon-free
trajectory} can track the envelope of per-horizon-tuned cosine schedules at
almost every intermediate checkpoint of a run up to $32\times$ the
Chinchilla-optimal budget.

\begin{figure*}[!tp]
    \centering
    \includegraphics[width=1.0\linewidth]{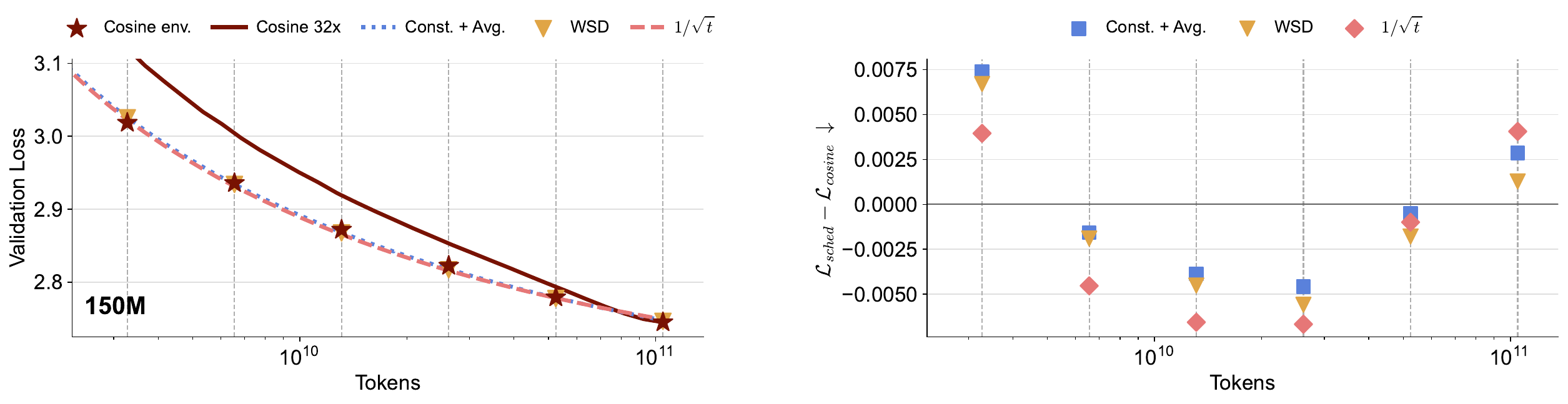}
    \vspace{0.5em}
    \includegraphics[width=1.0\linewidth]{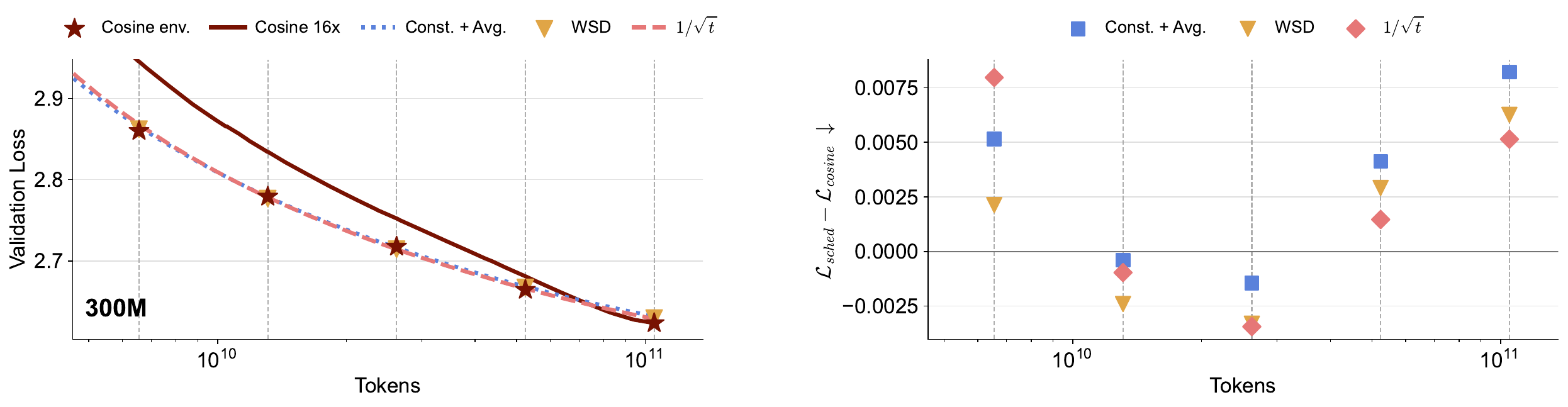}
  \caption{Anytime schedules vs.\ cosine decay; 150M (top) and 300M (bottom). \textbf{Left:} Validation loss vs.\ training compute at $1\times$--$32\times$ Chinchilla ($1\times$--$16\times$ for 300M). Cosine baselines are \emph{separate} runs, each tuned for its own duration; red stars mark the resulting per-duration optimal cosine envelope, and the red curve is the single cosine run tuned for the full horizon ($32\times$ and $16\times$, respectively). Constant LR with averaging and the $1/\sqrt{t}$ schedule are each a \emph{single} run trained to the full horizon and evaluated at intermediate checkpoints; the $1/\sqrt{t}$ schedule uses the multiplicative factor $\sqrt{\alpha/(t+\alpha)}$ with tuned $\alpha$. WSD branches from checkpoints of the constant-LR run at $90\%$ of each target budget and applies a linear decay over the final $10\%$ (one decay branch per budget, not a single trajectory). \textbf{Right:} Loss difference relative to cosine at each compute multiple (negative = better than cosine). Hyperparameters of the single run are chosen to closely track the cosine envelope and are reported in Table~\ref{tab:figure2-selected-hps}, with prospective transfer results shown in Figure~\ref{fig:hyperparameter-transfer-heatmaps}.}
  \label{fig:main_figure_both}
\end{figure*}

To address this question, we turn to the simplest theoretical proxy for pretraining, the quadratic model. Concretely, the bias-variance decomposition of the risk in quadratics~\citep{jain2018parallelizing,zou2023benign} suggests that weight averaging combined with polynomial learning rate decay can be horizon-free under certain spectral assumptions, while still achieving optimal rates. The same analysis predicts that when the variance contribution is small, e.g. at large batch size, maintaining a larger learning rate decreases the bias term faster, and decay becomes unnecessary. A central goal of this paper is to test whether these prescriptions survive contact with LLM pretraining at scale.

\paragraph{Contributions.} In Section~\ref{sec:theoretical_analysis}, we begin by deriving an anytime schedule in linear regression on quadratics, and specializing this theorem to power law covariance.~\citet{zhang2024does} (Corollary 2) have shown that for linear regression on power law data with capacity exponent $a$ and source exponent $b$, a constant learning rate with tail averaging is anytime for $b \leq a$ (and in particular it is optimal for $b=a$). In this work, we extend this regime by showing that a polynomial learning rate scheduler of the form $1/t^\gamma$ for $\gamma$ depending on $a$ and $b$ is anytime for $a < b \leq 2a$, and achieves the minimax optimal exponent for power law regression~\citep{zhang2024optimality}, up to logarithmic factors and constants.~\citet{zhang2024does} also provide a tight risk analysis in this regime for a constant learning rate with averaging. However, note that the constant rate is horizon dependent, as it scales as $N^{a/b - 1}$. Our results are compared to existing literature in Table~\ref{tab:related_rates}.

Following our theoretical analysis, we stress test the prescriptions of the quadratic model in Section~\ref{sec:empirical_findings}. We establish a baseline \textit{cosine envelope} by training 150M up to 32$\times$ Chinchilla and 300M LLMs up to 16$\times$ Chinchilla tokens using cosine decay tuned for each power-of-2 horizon in this interval. We then test several candidates for anytime schedules: a constant learning rate with iterate EMA and a polynomially decaying $1/\sqrt{t}$ with iterate EMA, as well as the nearly-anytime scheduler WSD, and seeing if they can closely track the cosine envelope. Figure~\ref{fig:cosine_envelope} shows why this is a hard baseline: a cosine tuned for a long horizon is substantially suboptimal at intermediate regimes. For the anytime candidates, we provide 2 evaluations: in Figure~\ref{fig:main_figure_both} we show that amongst all sweep configurations, there exists a \textit{single run} that can closely track the cosine envelope. Next, in Figure~\ref{fig:hyperparameter-transfer-heatmaps} we show that one can choose these runs prescriptively: following a burn-in, the optimal configuration at 4$\times$ Chinchilla across both models closely tracks the cosine envelope until the last evaluated point, while only incurring a gap at the 1$\times$ horizon. 

\section{Related Work}
\label{sec:related_work}
\paragraph{Finite dimensional SGD analysis.} There is a wide body of literature studying risk bounds in stochastic gradient descent, both in the finite dimensional regime and in the infinite dimensional/nonparametric setup.~\citet{polyak1992acceleration} introduced averaged SGD as an algorithm to achieve improved SGD convergence rates, while~\citet{bach2013non} later analyzed the rate obtained by it in least squares.~\citet{defossez2015averaged} analyzed constant learning rate with averaged iterates for least-squares, providing rates for the bias and variance terms, with similar proofs being shown in~\citet{jain2017markov,dieuleveut2016nonparametric}. This analysis has been extended to minibatch gradient descent~\citep{jain2018parallelizing} and streaming algorithms~\citep{frostig2015competing}. When the horizon is known in advance,~\citet{jain2019making} have shown that a carefully designed step size sequence can be minimax optimal for last iterate SGD, building up on previous work by~\citet{shamir2013stochastic,harvey2019tight}. \citet{ge2019step} showed that the final iterate under polynomially decaying
stepsizes is provably off by a
condition-number factor in the strongly convex case, and no better than
$d\sigma^2/\sqrt{T}$ against the minimax $d\sigma^2/T$ in the smooth case, while a geometrically decaying step-decay schedule is
near-minimax-optimal up to logarithmic factors, with
\citet{pan2021eigencurve} refining the schedule further.
\paragraph{Nonparametric Least Squares.} Recent work by~\citet{zhang2024does} has shown that for power law spectra and under certain conditions on source and capacity exponents, averaging can be minimax optimal.~\citet{zhang2024optimality} have
established minimax optimal rates for SGD-type methods under power-law
source and capacity conditions, characterizing the regimes in which plain
SGD attains the minimax rate and where acceleration is required. Other learning rate schedules have been analyzed using a bias variance decomposition technique~\citep{zou2023benign, wu2022last, wu2022power, meterez2025simplified}. From a statistical physics point of view,~\citet{bordelon2021learning} have established precise asymptotics for SGD in the overparameterized regime, with a similar analysis being done by~\citet{atanasov2024scaling, atanasov2025two} using tools from random matrix theory. 
\begin{table*}[!t]
    \centering
    \setlength{\tabcolsep}{5pt}
    \renewcommand{\arraystretch}{1.2}
    \resizebox{\textwidth}{!}{%
    \begin{tabular}{l l c l l l}
        \toprule
        Schedule & Averaging & Horizon-free & Rate & Regime & Reference \\
        \midrule
        Constant $\eta$ & yes & \checkmark & $N^{-(b-1)/a}$ & $1<b\leq a$ & \citet{zhang2024does} \\
        Constant $\eta \propto N^{a/b-1}$ & yes & $\times$ & $N^{-(1-1/b)}$ & $a<b \leq 2a$ & \citet{zhang2024does} \\
        Horizon-optimal schedule & last iterate & $\times$ & $N^{-(b-1)/\max\{a,b\}}$ & both & \citet{bordelon2026theory} \\
        \midrule
        $1/t^{\gamma^\star}$, $\gamma^\star = 1 - a/b$ & yes & \checkmark & $N^{-(1-1/b)}$ & $1 < a < b \leq 2a$ & \textbf{this work} (Thm~\ref{thm:tgamma_rate}, Cor.~\ref{cor:optimal_hps}) \\
        \bottomrule
    \end{tabular}}
    \caption{Excess-risk rates for SGD in infinite dimensional linear regression and closely related models. We use capacity exponent $a$ and source exponent $b$. Horizon-free means that the learning rate scheduler does not depend on the training horizon $N$. Rates are presented up to constants and logarithmic factors. }
    \label{tab:related_rates}
\end{table*}
\paragraph{Theory and Practice for Learning Rate Scheduling.}~\citet{defazio2023optimal} have analyzed the linear decay schedule and have proposed further refinements for this schedule using gradient norms. Linearly decaying to zero has been studied empirically by~\citet{bergsma2025straight}, achieving competitive performance to cosine annealing.~\citet{defazio2024road} have proposed a schedule-free optimizer, which takes advantage of tail averaging to remove the need for learning rate scheduling. Recent works leverage power-law kernel and random-feature regression models to derive optimal learning-rate schedules for last-iterate
performance~\citep{li2026optimal,bordelon2026theory}.
Closely related to our work,~\citet{hagele2024scaling} show that a constant learning rate with a decay at the end (a schedule similar to the trapezoidal scheduler proposed by~\citet{zhai2022scaling} and WSD~\citep{hu2024minicpm}) performs close to cosine decay at matched training lengths. Moreover, they find that stochastic weight averaging (SWA) with a fixed window improves models throughout training, but does not fully close the gap to learning rate decay. Our results complement this finding: guided by the quadratic analysis, we use an EMA window that grows proportionally with elapsed training time, and ask whether a single fixed configuration can track the envelope of per-horizon-tuned cosine baselines at every intermediate checkpoint of runs extending to $32\times$ Chinchilla. In this regime, we find that averaging can track the cosine envelope with no decay phase.~\citet{schaipp2025surprising} show that last-iterate bounds from convex optimization closely predict the behavior of practical schedules, explaining the benefit of cooldowns.~\citet{tissue2026scaling,luo2025multi,li2026functional} model the loss curve as a function the scheduler, predicting behaviour across horizons and schedulers. More recently, stochastic weight averaging has obtained impressive results in the AlgoPerf competition~\citep{kasimbeg2025accelerating, dahl2023benchmarking}, with~\citet{ajroldi2025and} providing a systematic study of when averaging helps. Weight averaging has also been commonly applied in image generation and diffusion~\citep{yazici2018unusual, karras2024analyzing,song2020score}. Other variants of weight averaging have been used in practice~\citep{kaddour2022stop,sanyal2023early,li2024switch,morales2024exponential,arpit2022ensemble, wortsman2022model}. 
It has been empirically observed that averaging can also lead to flatter minima and improved generalization performance on multiple image tasks~\citep{izmailov2018averaging}. 
\paragraph{Continual Learning and Anytime Training.} Generally, continual learning refers to training a model on a sequence of (possibly orthogonal) tasks, ensuring that the model does not forget any of them~\citep{wang2024comprehensive, aljundi2019online, veniat2020efficient, cossu2024continual, ramasesh2021effect, mehta2023empirical, joudaki2025barriers}. Anytime training~\citep{caccia2022anytime,ibrahim2024simple, gupta2023continual} refers to training a model optimally without having access to the total amount of steps i.e. the training horizon. While the two themes appear separate, we believe they are closely related, as anytime pretraining is a necessary stepping stone towards continual learning. A possible technique for anytime pretraining involves re-warming and re-decaying the learning rate after each stage of training~\citep{ibrahim2024simple}, which has been explored by~\citet{wen2024understanding} in the context of Warmup-Stable-Decay (WSD)~\citep{hu2024minicpm} schedules. Learning dynamics of WSD have been studied empirically by~\citep{belloni2026universal}.~\citet{singh2025beyond} show that infinite schedules which avoid re-warming altogether consistently improve continual pretraining over repeated cosine cycles.

\section{Theoretical Analysis}
\label{sec:theoretical_analysis}
In this section, we provide an upper bound on the risk of SGD with iterate averaging, and show that the $1/t^\gamma$ schedule achieves the minimax rate exponent for power law Gaussian data established by~\citet{zhang2024optimality}, up to logarithmic and constant factors. To complement these asymptotic results, Figure~\ref{fig:noalpha_synthetic_0.1} evaluates the infinite dimensional (approximated by $d \gg N$) recursion for Gaussian linear regression with constant noise variance. The lower noise experiment in Figure~\ref{fig:noalpha_synthetic_0.01} shows that decreasing $\sigma^2$ delays the variance-dominated crossover while leaving the schedules horizon-free.
\begin{figure*}[!htp]
    \centering
    \includegraphics[width=1.0\textwidth]{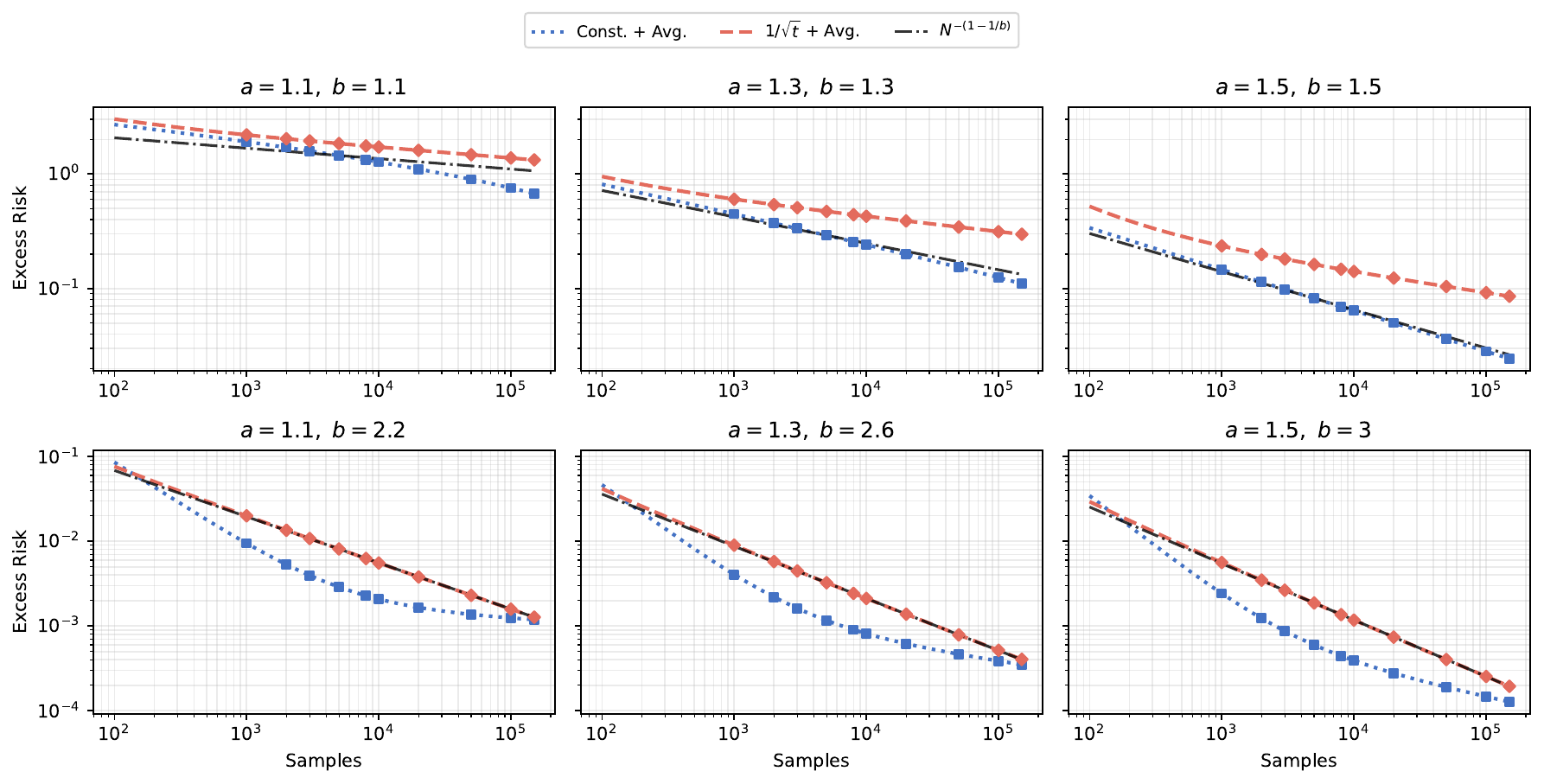}
    \caption{Synthetic excess risk for SGD on Gaussian linear regression with diagonal Hessian $\lambda_i=i^{-a}$, target coordinates $(w_i^\star)^2=i^{-(b-a)}$, dimension $d=500{,}000$, $N=150{,}000$, and $\sigma^2=0.1$. Constant and $1/\sqrt{t}$ learning rates both use iterate averaging. For each panel and scheduler, we sweep fixed multipliers $c\in\{10^{-4},2\!\times\!10^{-4},5\!\times\!10^{-4},7\!\times\!10^{-4},10^{-3},2\!\times\!10^{-3},5\!\times\!10^{-3},10^{-2},2\!\times\!10^{-2},3\!\times\!10^{-2},5\!\times\!10^{-2},7.5\!\times\!10^{-2},0.1,0.2,0.3,0.5,0.8,1,2,3,5,10\}$, where the actual step size is $c/\tr(\H)$ for constant LR and $c/(\tr(\H)\sqrt{t})$ for inverse square root. We select the learning rate using the best loss at $N=5{,}000$ and evaluate that same trajectory at every later checkpoint. The dashed trace shows the predicted rate $N^{-(1-1/b)}$. We show the synthetic results for $\sigma^2=0.01$ in Figure~\ref{fig:noalpha_synthetic_0.01}.}
    \label{fig:noalpha_synthetic_0.1}
\end{figure*}
\paragraph{Setup and notation.} Throughout the theory section of the manuscript, we study linear regression on Gaussian data with batch size $1$ and independent additive noise in the samples. We use the notation $f \lesssim g$ to say there exists some positive constant $c$ such that for any $x$ in the domain we have $f(x) \leq c g(x)$. We also denote $f \eqsim g$ if $f(x) \lesssim g(x) \lesssim f(x)$ for all $x$. Note that we will also absorb $\log$ factors in the $\lesssim$ notation, stating where we do so. We also denote the induced norm $\|\w\|_{\mathbf{A}}^2 = \w^\top \mathbf{A}\w$. For a diagonal matrix we define its restriction to entries between $i$ and $j$ where $1 \leq i \leq j \leq \infty$ as $\LLambda_{i:j} = \diag(\lambda_i, \dots, \lambda_j)$. We denote the i.i.d. input-output stream as $\{(\x_t, y_t)\}_{t\geq 1}$ where each $\x_t \in \R^d$ and $y_t \in \R$ satisfy:
\begin{align*}
    \x_t \sim \N(0, \H) && y_t|\x_t \sim \N(\x_t^\top \w^\star, \sigma^2)
\end{align*}
where $\w^\star$ is the minimizer and $\sigma^2$ is the variance of the additive noise. We take the convention that $\lambda_{\max} = \lambda_1$ and eigenvalues are sorted in nonincreasing order. We define the risk under the mean squared error loss to be:
\begin{align*}
    \risk(\w) = \frac{1}{2} \E_{\x, y}(\x^\top \w - y)^2
\end{align*}
We denote the learning rate at time $t\geq 1$ by $\eta_t$ and in general $\eta_t \coloneq \eta \cdot f(t, N)$ for some function $f$ of the current time $t$ and optionally the end time $N$, and for a constant base learning rate $\eta \in (0, 1)$ that does not depend on $N$. We take $\w_0=0$ as the initialization without loss of generality, and count updates from $t=1$, so update $t$ uses $(\x_t,y_t)$ and produces $\w_t$. We also denote $\bw_{s:s+T} \coloneq \frac{1}{T} \sum_{i=s}^{s+T-1} \w_i$ to be the average of the last $T$ iterates starting from step $s\geq1$. The main proof technique used to establish the rates is based on the bias-variance decomposition of the risk~\citep{jain2018parallelizing,jain2017markov,zou2023benign,wu2022last,wu2022power,meterez2025simplified}. All the proofs are deferred to Appendix~\ref{app:proofs}.
\subsection{Main results}
We now state the main result regarding the excess risk rate of SGD with $\eta_t = 1/t^\gamma$. We state Theorem~\ref{thm:tgamma_rate} using $s=\Theta(N)$ in order to simplify notation, but we provide the general bound in the proof.
\begin{theorem}
    \label{thm:tgamma_rate}
    Let $\LLambda = \diag(\lambda_i)_{i\geq 1}$ be the eigenvalues of $\H$ and learning rate $\eta_t = \eta/t^\gamma$ for $t \geq 1$, $\gamma \in (0, 1/2]$, and some constant $0 < \eta \lesssim 1/\tr(\H)$ independent of $N$, and $\|\w^\star\|_\H^2 \lesssim \sigma^2$. We have for $s = \Theta(N)$:
    \begin{align*}
        \E \risk(\bw_{s:s+N}) - \frac{\sigma^2}{2} &\lesssim \frac{1}{N} \|\w^\star\|_{\LLambda_{1:k^\star}}^2 + \|\w^\star\|_{\LLambda_{k^\star:\infty}}^2 + \frac{k^\star \sigma^2}{N}
     + \sigma^2
    \sum_{k>k^\star}
    \left(
        \eta^2 \lambda_k^2\, N^{1-2\gamma}
    + \eta \lambda_k\, N^{-\gamma}
    \right)
    \end{align*}
    for $k^\star
\coloneqq
\max\Big\{k:\ \lambda_k \ge \tfrac{\log N}{\eta N^{1-\gamma}}\Big\}$, where $\lesssim$ absorbs absolute constants and the $\log N$ term for $\gamma = 1/2$.
\end{theorem}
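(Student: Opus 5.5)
We follow the standard bias–variance route for averaged SGD on Gaussian least squares \citep{jain2018parallelizing,zou2023benign,wu2022last}. Write $\w_t-\w^\star = \mathbf{b}_t + \mathbf{v}_t$, where $\mathbf{b}_t$ is driven by the initial error $-\w^\star$ with zero noise and $\mathbf{v}_t$ starts at $0$ and is driven only by the label noise. This gives
\[
\E\risk(\bw_{s:s+N})-\tfrac{\sigma^2}{2} \lesssim \bias+\variance ,
\]
with both terms expressed through the second-moment operators $\mathbf{B}_t=\E[\mathbf{b}_t\mathbf{b}_t^\top]$ and $\mathbf{C}_t=\E[\mathbf{v}_t\mathbf{v}_t^\top]$. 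For the average, expand $\E\|\bw_{s:s+N}-\w^\star\|_\H^2$ into cross terms $\E[(\w_i-\w^\star)(\w_j-\w^\star)^\top]$ for $i\le j$. These equal $\prod_{l=i+1}^{j}(\I-\eta_l\H)$ applied to the second moment at time $i$, so the averaged risk is bounded by
\[
\frac{1}{N^2}\sum_{i}\big\langle \H\,\textstyle\sum_{j\ge i}\prod_{l=i+1}^{j}(\I-\eta_l\H),\ \mathbf{B}_i+\mathbf{C}_i\big\rangle .
\]
Diagonally, the geometric-type sum in eigendirection $k$ is at most $\min\{N,\ 1/(\eta_{s+N}\lambda_k)\}$. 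Because $\eta_t$ is essentially constant, $\eta_t\eqsim\eta N^{-\gamma}$ on $[s,s+N]$ when $s=\Theta(N)$, this is where $k^\star$ enters. For $k\le k^\star$ we have $\eta\lambda_k N^{1-\gamma}\gtrsim\log N$, so the window truly averages; for $k>k^\star$ the window factor is just $N$.

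\textbf{Variance.} First bound $\mathbf{C}_t$ using the Gaussian fourth-moment inequality $\E[\x\x^\top\mathbf{A}\x\x^\top]\preceq 3\tr(\H\mathbf{A})\H+2\H\mathbf{A}\H$ together with $\eta\lesssim1/\tr(\H)$. A self-bounding induction then shows $\tr(\H\mathbf{C}_t)\lesssim\sigma^2$ and, coordinatewise,
\[
\mathbf{C}_t\preceq \sigma^2\sum_{\tau\le t}\eta_\tau^2\,(\I-\eta_\cdot\H)^{2(t-\tau)}\H+\text{lower order},
\]
whose $k$th diagonal entry is $\lesssim\sigma^2\min\{1,\ \eta\lambda_k\sum_{\tau\le t}\tau^{-2\gamma}\}$. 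Feeding this into the averaged expression splits the sum over eigendirections into two parts.
\begin{itemize}[leftmargin=*]
\item For $k\le k^\star$: averaging gives $\sigma^2/N$ per direction, hence the $k^\star\sigma^2/N$ term.
\item For $k>k^\star$: the per-direction contribution is $\lambda_k\cdot\sigma^2\eta\lambda_k N^{1-2\gamma}$ from accumulated noise, which gives the $\eta^2\lambda_k^2N^{1-2\gamma}$ term after accounting for the $\eta$ factors. It also includes the stationary fluctuation $\sigma^2\eta_{s}\lambda_k\eqsim\sigma^2\eta\lambda_kN^{-\gamma}$, which is what the last-iterate variance contributes in unaveraged directions.
\end{itemize}
At $\gamma=1/2$ the sum $\sum_\tau\tau^{-1}$ produces the $\log N$ absorbed by $\lesssim$.

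\textbf{Bias.} The key fact is $\sum_{\tau\le s}\eta_\tau\eqsim\eta N^{1-\gamma}$. So at time $s$ the noiseless recursion has already contracted direction $k$ by roughly $\exp(-\eta\lambda_kN^{1-\gamma})$. For $k\le k^\star$ this is at most $1/N$ by the definition of $k^\star$ with its $\log N$, giving $\frac1N\|\w^\star\|^2_{\LLambda_{1:k^\star}}$, which is in fact loose. For $k>k^\star$ we bound the contribution trivially by $\lambda_k(w^\star_k)^2$. The fourth-moment cross term couples directions through $\tr(\H\mathbf{B}_t)\lesssim\|\w^\star\|_\H^2\lesssim\sigma^2$. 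It therefore behaves like extra noise of size $\sigma^2$ and folds into the variance terms, which is exactly why the hypothesis $\|\w^\star\|_\H^2\lesssim\sigma^2$ is assumed.

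\textbf{Main obstacle.} The hardest step is controlling $\mathbf{B}_t$ and $\mathbf{C}_t$ under a time-varying step size. The clean stationary bound used for constant steps, $\mathbf{C}_\infty\preceq\eta\sigma^2\I/(1-\eta\tr\H)$, is not directly available. Instead we would prove by induction that $\mathbf{C}_t\preceq c\,\sigma^2\eta_t\I$ plus the accumulated-noise term, using that $\eta_t$ is nonincreasing and decays slowly: $\eta_{t-1}/\eta_t = 1+O(1/t)$. The summed geometric products $\sum_j\prod_l(\I-\eta_l\H)$ are then compared to their constant-step counterparts with step $\eta_{s+N}\eqsim\eta_s$ over the window $[s,s+N]$.
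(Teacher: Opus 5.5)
Your skeleton matches the paper's proof. You reduce the averaged risk to $\prod_{l=i+1}^{j}(\I-\eta_l\H)$ acting on the second moment at time $i$, and bound the window sum by a geometric series with step $\eta_{s+N}$. You use a self-bounding induction on $\tr(\H\,\cdot)$ (the paper's Lemma~\ref{lem:bounded_risk}) to turn the fourth-moment coupling into extra noise of size $\lesssim\sigma^2$, which is where $\|\w^\star\|_\H^2\lesssim\sigma^2$ enters. Then you split at $k^\star$. Working with iterate-level $\mathbf{b}_t,\mathbf{v}_t$ instead of the paper's diagonal $\m_t$ of $\Q^\top\SSigma_t\Q$ is only cosmetic.

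The step that does not work as written is the one you flag as the main obstacle: the per-direction noise level.

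First, the displayed bound $\sigma^2\min\{1,\eta\lambda_k\sum_{\tau\le t}\tau^{-2\gamma}\}$ is at the wrong scale. The stationary level of the $k$-th diagonal entry is $\sigma^2\eta_t$, and the accumulated part is $\sigma^2\eta^2\lambda_k\sum_{\tau\le t}\tau^{-2\gamma}$. With level $\sigma^2$, each head direction would contribute $\sigma^2/(N\eta_{s+N})\eqsim\sigma^2N^{\gamma-1}/\eta$ rather than $\sigma^2/N$.

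Second, you later state the right target $\mathbf{C}_t\preceq c\,\sigma^2\eta_t\I$, but the induction you propose via $\eta_{t-1}/\eta_t=1+O(1/t)$ does not close. Per direction, the inductive step needs $(1-\eta_t\lambda_k)^2c\,\eta_{t-1}+\eta_t^2\lambda_k\le c\,\eta_t$. That requires $\eta_t\lambda_k\gtrsim\gamma/t$, i.e.\ $\eta\lambda_k t^{1-\gamma}\gtrsim\gamma$. It therefore fails in far-tail directions throughout the run and in every direction at early times, because the contraction is too weak to track a shrinking target. The bound is still true there, but only because the undiscounted accumulation $\sigma^2\eta^2\lambda_k\sum_{\tau\le t}\tau^{-2\gamma}$ is itself $\lesssim\sigma^2\eta_t$ (up to $\log t$ at $\gamma=1/2$) whenever $\eta\lambda_k t^{1-\gamma}\lesssim 1$. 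So you would need a min-type hypothesis with a case split.

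The paper avoids induction here altogether. After decoupling, it unrolls to $\sigma^2\eta^2\lambda_k\sum_{p\le i}p^{-2\gamma}\exp[-2\eta\lambda_k(i^{1-\gamma}-p^{1-\gamma})]$ and splits at $p=i/2$:
\begin{itemize}
\item For $p\le i/2$, Lemma~\ref{lem:mvt_bound} makes the exponent of order $\eta\lambda_k i^{1-\gamma}$, giving $e^{-\eta\lambda_k i^{1-\gamma}/2}s_i(\gamma)$.
\item For $p\ge i/2$, $p^{-2\gamma}\lesssim i^{-2\gamma}$, and a geometric sum gives $i^{-\gamma}/(\eta\lambda_k)$.
\end{itemize}
This yields the head level $\sigma^2\eta i^{-\gamma}$ and both tail terms, $\eta^2\lambda_k^2N^{1-2\gamma}$ and $\eta\lambda_k N^{-\gamma}$, directly. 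With that estimate substituted for your induction, your argument goes through.
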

Following~\citet{jain2017markov}, we have one contribution to the rate coming from the mean SGD process fitting the data and approaching the minimizer $\w^\star$ termed \textit{bias}, and another term coming from the additive and sampling noise, known as \textit{variance}. In finite dimension, taking $k^\star = d$ recovers the asymptotic $d \sigma^2/N$ variance rate~\citep{polyak1992acceleration,bach2013non,ge2019step}. 

In Corollary~\ref{cor:optimal_hps}, we specialize the covariance spectrum to a power law setting, using source and capacity exponents~\citep{caponnetto2007optimal,lin2024scaling,bordelon2024dynamical,paquette20244+}, and compute the optimal choice of gamma as a function of these exponents.

\begin{corollary}
\label{cor:optimal_hps}
Suppose the following capacity and source conditions
\[
    \lambda_k \eqsim k^{-a},
    \qquad
    \lambda_k (w_k^\star)^2 \eqsim k^{-b},
    \qquad
    1 < a<b\leq 2a
\]
and assume $\|\w^\star\|_\H^2 \lesssim \sigma^2$. Then the rate-optimal decay exponent is
\[
    \gamma^\star = 1-\frac{a}{b}\in\left(0,\frac12\right].
\]
In particular, for $\eta_t=\eta/t^{\gamma^\star}$ for $t \geq 1$ and
$s=\Theta(N)$, Theorem~\ref{thm:tgamma_rate} gives
\[
    \E\risk(\bw_{s:s+N})-\frac{\sigma^2}{2}
    \lesssim
    N^{-\left(1-\frac1b\right)},
\]
where $\lesssim$ hides constants and the $\log N$ factors.
\end{corollary}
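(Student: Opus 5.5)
Taking Theorem~\ref{thm:tgamma_rate} as given, the plan is to evaluate each of its four terms under the power-law assumptions and then choose $\gamma$ to balance them. With $\lambda_k\eqsim k^{-a}$, the effective dimension is
\[
k^\star\eqsim\Big(\frac{\eta N^{1-\gamma}}{\log N}\Big)^{1/a},
\]
which up to logarithms is $N^{(1-\gamma)/a}$, since $\eta$ is a constant independent of $N$. The estimates for the four terms are then as follows.

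\textbf{Head bias.} We have $(w_k^\star)^2\lambda_k\eqsim k^{-b}$ with $b>1$, so $\|\w^\star\|^2_{\LLambda_{1:k^\star}}\eqsim\sum_{k\le k^\star}k^{-b}\lesssim 1$. This term is therefore $\lesssim 1/N$.

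\textbf{Tail bias.} Here $\sum_{k>k^\star}k^{-b}\eqsim (k^\star)^{1-b}\eqsim N^{-(1-\gamma)(b-1)/a}$.

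\textbf{Head variance.} This is $k^\star\sigma^2/N\eqsim N^{(1-\gamma)/a-1}$.

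\textbf{Tail variance.} Since $a>1$, $\sum_{k>k^\star}\lambda_k\eqsim (k^\star)^{1-a}$ and $\sum_{k>k^\star}\lambda_k^2\eqsim (k^\star)^{1-2a}$. The two pieces become
\[
\eta N^{-\gamma}(k^\star)^{1-a}\eqsim N^{-\gamma-(1-\gamma)(a-1)/a}=N^{(1-\gamma)/a-1},
\qquad
\eta^2N^{1-2\gamma}(k^\star)^{1-2a}\eqsim N^{1-2\gamma-(1-\gamma)(2a-1)/a}=N^{(1-\gamma)/a-1}.
\]
So both tail-variance pieces match the head variance exactly, which is the standard effective-dimension phenomenon.

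The risk is thus bounded, up to logarithms, by $N^{-1}+N^{-(1-\gamma)(b-1)/a}+N^{-(1-(1-\gamma)/a)}$. The bias exponent increases as $\gamma$ decreases and the variance exponent increases as $\gamma$ increases, so the optimum equalizes them:
\[
(1-\gamma)(b-1)=a-(1-\gamma)\;\Longleftrightarrow\;1-\gamma=\frac ab.
\]
This gives $\gamma^\star=1-a/b$, and the common exponent is $(b-1)/b=1-1/b$. Since $b>a$ we get $\gamma^\star>0$, and since $b\le 2a$ we get $\gamma^\star\le 1/2$. This confirms $\gamma^\star\in(0,1/2]$, so Theorem~\ref{thm:tgamma_rate} applies, including its $\log N$ case at $\gamma=1/2$. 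The $1/N$ head-bias term is dominated because $1-1/b<1$.

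The main step to be careful about is tracking the $\log N$ inside $k^\star$. Replacing $N^{(1-\gamma)/a}$ by $(N^{1-\gamma}/\log N)^{1/a}$ introduces factors $(\log N)^{(b-1)/a}$ in the tail bias and $(\log N)^{\pm}$-type factors in the variance terms. These are polylogarithmic and absorbed by the corollary's convention. The other points to check are that $k^\star\ge1$ for large $N$, and that the tail sums are comparable to their integrals. The latter holds because $b>1$, $a>1$ and $2a>1$ make each sum converge at rate $(k^\star)^{1-p}$. Finally, "rate-optimal" should be justified by noting that any $\gamma\ne\gamma^\star$ makes one of the two exponents strictly smaller. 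The rate $N^{-(1-1/b)}$ also matches the minimax rate of \citet{zhang2024optimality} in this regime.
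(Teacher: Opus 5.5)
Your proposal is correct and follows essentially the same route as the paper: you specialize the four terms of Theorem~\ref{thm:tgamma_rate} under the power laws, and you show the two tail-variance pieces are of order $\sigma^2 k^\star/N$ up to logarithms. Balancing $(k^\star)^{1-b}$ against $k^\star/N$ then gives $1-\gamma^\star=a/b$ and the rate $N^{-(1-1/b)}$. The only cosmetic difference is that you equate exponents directly as functions of $\gamma$, whereas the paper first solves for $k^\star\eqsim N^{1/b}$ and then matches this with $k^\star\eqsim N^{(1-\gamma)/a}$.
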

From Corollary~\ref{cor:optimal_hps} we see that the optimal $\gamma^\star$ depends on the spectral properties of the data. Within the regime $1<a<b\leq 2a$, larger values of $b$ correspond to signal that decays more rapidly across the eigendirections of $\H$, and therefore to faster learning-rate decay: $\gamma^\star=1-a/b$ increases from $0$ to $1/2$. Conversely, as $b$ approaches $a$, $\gamma^\star$ approaches $0$, for which prior work shows that constant-stepsize SGD with tail averaging can attain the corresponding optimal rates~\citep{zhang2024does,zou2023benign}. Intuitively, when $b < a$, more target energy lies in tail directions, so maintaining a larger learning rate improves contraction of the bias. 
\paragraph{Constant learning rate.}\citet{jain2018parallelizing} (Theorem 1) have shown that in the classical regime, a constant learning rate with tail averaging suffices to achieve the minimax rate in the variance of $\sigma^2 d/N$. In the high dimensional regime, ~\citet{zou2023benign,zhang2024does} have shown that a constant learning rate with tail averaging can achieve the minimax SGD rate under source and capacity constraints. For $b \leq a$, ~\citet{zhang2024does} (Corollary 2) show that a constant learning rate with averaging is a true anytime schedule, whereas for $b > a$ the learning rate has to scale with the horizon $N$. We have tried a constant learning rate with weight averaging in Figure~\ref{fig:main_figure_both}, showing that this schedule performs competitively with cosine and $1/t^\gamma$.
\paragraph{WSD.} WSD learning rate schedules have shown great promise empirically, being adopted by frontier labs~\citep{team2025kimi}. Recall that the WSD schedule, as introduced by~\citet{hu2024minicpm}, consists of a warmup stage, a constant learning rate stage and a decay over the last $10\%$ of the training run. Related work by~\citet{bordelon2026theory} (Result 2) and~\citet{li2026functional} derive optimal last-iterate horizon dependent learning rate schedules in a power law random feature model, which recovers our linear regression setup for identity features. Informally, the WSD schedule is characterized by how long the constant learning rate region should lsat for as a function of the end time. \citet{bordelon2026theory} (Result 2) recover 2 separate schedules: for $b > a$, it is optimal to decay for the whole training run, whereas for $b < a$ they recover a schedule resembling WSD where the constant learning rate fraction scales as a function of the end time and the source and capacity exponents. The main focus of our work is complementary: we study whether horizon-free learning-rate trajectories, combined with iterate averaging, can achieve optimal rates without knowing the stopping time.

\section{Empirical Findings}
\label{sec:empirical_findings}

\begin{table*}[!t]
    \centering
    \small
    \setlength{\tabcolsep}{8pt}
    \begin{tabular}{llcccc}
        \toprule
        Model & Schedule & Evaluated through & $\eta$ & $\beta_2$ & $\alpha$ \\
        \midrule
        150M & Constant & $32\times$ & 0.003 & 0.99 & -- \\
        150M & $1/\sqrt{t}$ & $32\times$ & 0.003 & 0.98 & 51,200 \\
        \midrule
        300M & Constant & $16\times$ & 0.003 & 0.95 & -- \\
        300M & $1/\sqrt{t}$ & $16\times$ & 0.003 & 0.99 & 51,200 \\
        \bottomrule
    \end{tabular}
    \caption{Selected hyperparameters for the fixed constant-LR and $1/\sqrt{t}$ trajectories in Figure~\ref{fig:main_figure_both}. Each row is one trajectory evaluated at every displayed checkpoint through the stated horizon. All runs use a 5,000-step warmup, $\beta_1=0.9$, $\epsilon=10^{-8}$, zero weight decay, and batch size 256 for 150M or 512 for 300M.}
    \label{tab:figure2-selected-hps}
\end{table*}
In this section, we describe how we compare our learning-rate schedules to cosine decay. We focus on anytime schedulers, which are intended to operate without prior knowledge of the training horizon. For each anytime schedule we report results from \textit{a single training run} using a hyperparameter setting chosen to track cosine performance across the entire training trajectory, rather than maximizing final-step performance. As a point of comparison, Figure~\ref{fig:hyperparameter-transfer-heatmaps} shows the transfer across subsequent horizons when optimally tuning for an earlier one, and Appendix~\ref{app:additional_figures} includes additional loss curves obtained by selecting the best hyperparameter setting separately for each training budget.
\paragraph{Architecture and Dataset.}
We pretrain 150M and 300M models based on the OLMo codebase~\citep{groeneveld2024olmo,olmo20242,olmo2025olmo}. The architectural details are reported as depth, number of heads and width: 150M (12, 16, 1024), 300M (24, 16, 1024). The models are trained using AdamW with no weight decay, fixing $\epsilon = 10^{-8}$ and momentum $\beta_1=0.9$. The 150M cosine runs sweep over $\eta\in\{0.0003,0.001,0.003,0.01\}$ at the $1\times$--$8\times$ horizons and $\eta\in\{0.0003,0.001,0.003\}$ at the $16\times$--$32\times$ horizons, with $\beta_2\in\{0.95,0.98,0.99\}$. 

Note that our theoretical guarantees determine the asymptotic decay exponent only up to logarithmic and constant factors, however in practice a finite time offset is necessary. We parameterize the schedule as
\[
\eta_t=\eta\sqrt{\frac{\alpha}{t+\alpha}},
\]
where \(\alpha>0\) is a tuned offset controlling the transition from an approximately constant learning rate at \(t\ll\alpha\) to \(1/\sqrt{t}\) decay at \(t\gg\alpha\). Once \(\eta\) and \(\alpha\) are fixed, the schedule depends only on the current step \(t\), rather than on the terminal training horizon. In Figure~\ref{fig:main_figure_both}, \(\alpha\) is selected as part of a single configuration that is then evaluated at every intermediate horizon; it is not retuned separately at each endpoint. Figure~\ref{fig:inverse-sqrt-alpha-sensitivity-4x} examines sensitivity to this choice by fixing the remaining hyperparameters to those selected at \(4\times\) Chinchilla and comparing all executed values of \(\alpha\) across the evaluated horizons. We refer to the resulting shifted schedule as \(1/\sqrt{t}\) throughout for the remainder of this section.

For 150M, the constant learning-rate runs sweep over $\eta \in \{0.0003,0.001,0.003\}$ and $\beta_2\in\{0.95,0.98,0.99\}$, while the inverse-square-root runs sweep over the same learning rates, $\beta_2\in\{0.98,0.99\}$, and offsets $\alpha\in\{12{,}800,25{,}600,51{,}200\}$. For 300M, all schedulers sweep over $\eta\in\{0.0003,0.001,0.003\}$ and $\beta_2\in\{0.95,0.99\}$, with the inverse-square-root runs additionally sweeping over offsets $\alpha\in\{12{,}800,25{,}600,51{,}200\}$. All the WSD runs are continued from the checkpoints saved during the constant learning rate sweep, and we plot the best (without averaging) at each point. We train without z-loss and with a sequence length $L=1024$. We train our models on the C4 dataset~\citep{2020t5}, and we use the T5 tokenizer, and we do not repeat over the data, with all our runs being fully online.

\paragraph{Training and evaluation details.}
Following the Chinchilla~\citep{hoffmann2022training} calculation of $20$ tokens-per-parameter (TPP), we take as $1\times$ Chinchilla for the 150M model to be 3.3B tokens and 6.6B tokens for the 300M model. Unless specified otherwise, we train all our models close to the critical batch size for $1 \times$ Chinchilla, based on~\citet{zhang2024does}, which is 256 for 150M and following the $\sqrt{N}$ (for $N$ denoting the total data size scaling), we approximate at $512$ the critical batch size for the 300M model. We fix the warm-up duration to be $5000$ steps (corresponding to approximately $40\%$ of the respective $1\times$ Chinchilla number of tokens for each of the models), for all training runs (the only exception being the $B=4096$ runs, where we do $500$ warmup steps). For the 150M model, we train a cosine baseline at each power-of-2 multiple of Chinchilla from 1$\times$ to 32$\times$. For the $1/\sqrt{t}$ and constant with averaging runs, we train directly for the $32\times$ and compare with cosine at the intermediate points. Moreover, we save checkpoints at $90\%$ of each Chinchilla multiple for the constant learning rate run, and do a linear decay from each of these points in order to implement the WSD schedule. We follow a similar procedure for the 300M models, but stop at $16\times$. We keep the batch size fixed to the $1\times$ Chinchilla critical batch size in order to mimic an anytime training regime in which one would start training a model for $1 \times$ Chinchilla, and more data would be streamed in the training process down the line. We leave as future work how to combine anytime schedules with a batch ramp-up scheme.

The selected hyperparameters for the fixed constant-LR and $1/\sqrt{t}$ trajectories in Figure~\ref{fig:main_figure_both} are reported in Table~\ref{tab:figure2-selected-hps}. These configurations were chosen in order to minimize the average gap to the cosine envelope across all points, showing the existence of a \textit{single run} for the 2 schedulers that closely tracks the cosine envelope at all points. For a prospective comparison, we show in Figure~\ref{fig:hyperparameter-transfer-heatmaps} that one can choose the $4 \times$ optimal hyperparameters and still stay close to the cosine envelope, while incurring a penalty at the $1 \times$ multiplier.
\paragraph{Averaging.} We maintain EMA parameters 
\[
\bw_{t+1} = \tau_t \bw_{t} + (1-\tau_t) \theta_t \qquad \tau_t = 2^{-f/t}
\] 
At time $t$, the half life $h$ satisfies 
\[
(\tau_t)^h = \frac{1}{2} \implies h = \frac{t}{f}
\]
For all runs, we maintain multiple EMAs for $f \in \{6.25, 12.5, 25.0, 50.0, 100.0\}$ and treat $f=0.0$ separately by fixing it to mean the last iterate. Thus, for $f=25$ for example, the half life of the EMA is roughly $t/25 \approx 4\% \text{ of } t$. All coefficients depend only on the current time and a fixed value of \(f\), not on the terminal training horizon. The averaging procedure is therefore online and horizon-free. Our implementation maintains 5 model copies (corresponding to each value of $f$) in addition to the trainable model. Unless mentioned otherwise, the constant learning rate and $1/\sqrt{t}$ curves in the main figures use \(f=25\) throughout, whereas cosine and WSD use the last iterate, \(f=0\). We do not select \(f\) separately at each training horizon. 
We decided to fix $f=25$ across all experiments that use an EMA as this always corresponded to the best loss out of the EMA factors in our experiments.
\paragraph{Choice of $\gamma$.} We experimentally compare $\gamma = 0$ (constant learning rate) and $\gamma=1/2$, meaning $1/\sqrt{t}$, for which we provide the following explanation.~\citet{mlodozeniec2025completed, bjorck2024scaling} have shown that the optimal learning rate in practice approximately scales proportionally to $1/\sqrt{N}$. In the quadratic regime, the bias along eigendirection $i$ contracts at a rate controlled by the cumulative step size, roughly as $\exp\big(-\lambda_i \sum_{s\le t}\eta_s\big)$. Consequently, the mean process continues to make substantial progress up to time $N$ in all directions satisfying
\begin{align*}
    \lambda_i \sum_{s=1}^N \eta_s \gtrsim 1,
    \qquad\ \implies \qquad
    \lambda_i \gtrsim \frac{1}{\sum_{s=1}^N \eta_s}.
\end{align*}
For $\eta_s \eqsim 1/\sqrt{N}$, we have $\sum_{s=1}^N \eta_s \eqsim \sqrt{N}$. In order to ensure that an anytime scheme has the same scaling at any time $t$, one reasonable option is to choose $\eta_s = 1/\sqrt{s}$, since $\sum_{s=1}^t s^{-1/2} \eqsim \sqrt{t}$, thus motivating the choice of $\gamma=1/2$. We believe that other schedules around this value might also work~\citep{bjorck2024scaling,shuai2024scaling}, but for all our experiments, we chose to keep $1/2$. We formalize this bias--variance decomposition and the resulting recursion in Appendix~\ref{app:proofs},~\eqref{eq:bias_and_variance_eta}.
\subsection{Anytime Performance and Early-to-Late Hyperparameter Transfer}
\begin{figure*}[!htp]
    \centering
    \includegraphics[width=1.0\linewidth]{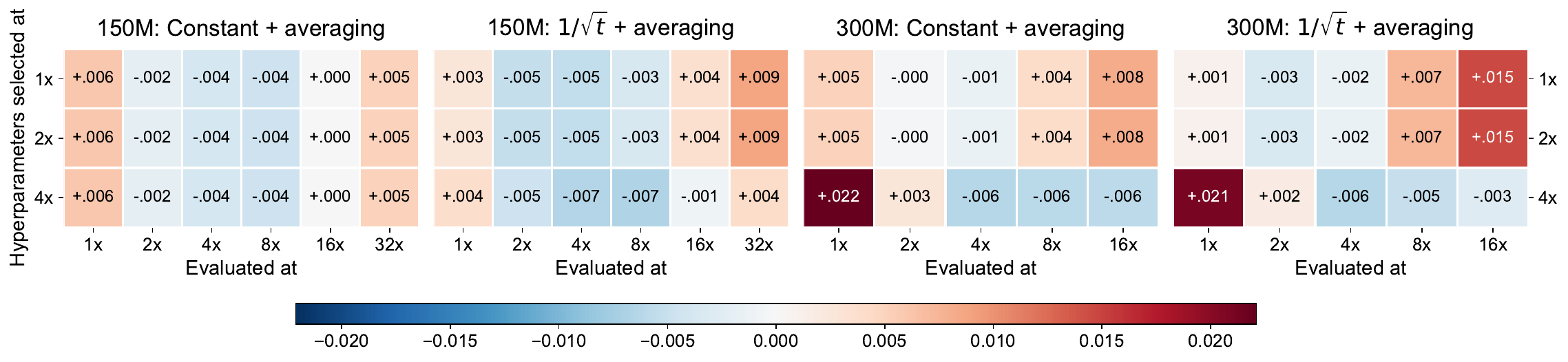}
    \caption{Prescriptive transfer for constant learning rate and $1/\sqrt{t}$. The rows select the configuration with the lowest validation loss at $1\times$, $2\times$, or $4\times$ Chinchilla and show the performance of that configuration at later points. Values are the $4\%$-EMA validation loss relative to the cosine envelope, and negative values indicate improvement over cosine. The hyperparameters are shown in Table~\ref{tab:optimal-hps-150m} and Table~\ref{tab:optimal-hps-300m}.}
    \label{fig:hyperparameter-transfer-heatmaps}
\end{figure*}
\paragraph{Fixed run comparison.}
Figure~\ref{fig:main_figure_both} shows that constant learning with averaging rate and \(1/\sqrt{t}\) with averaging, provide competitive horizon-free alternatives to cosine decay. For each scheduler, we fix a single hyperparameter configuration and evaluate the same trajectory at every intermediate horizon. We also compare against WSD, based on~\citet{team2025kimi,hu2024minicpm}: a warmup phase, followed by a constant learning rate until \(90\%\) of each target horizon and a linear decay over the remaining \(10\%\), ending at \(10\%\) of the peak learning rate. In practice, we save checkpoints at \(90\%\) of each intermediate horizon and initiate a separate decay from each checkpoint. Unlike the constant and \(1/\sqrt{t}\) trajectories, WSD is not horizon-free because the start of its decay depends on the selected stopping horizon.
\paragraph{Prospective hyperparameter transfer.}
The fixed trajectories in Figure~\ref{fig:main_figure_both} establish that a single configuration can remain competitive across the evaluated horizons, but practical use also requires a way to select that configuration. At a checkpoint $T_0$, we select the configuration with the lowest validation loss among the executed sweep. We then continue with the same configuration and evaluate it at every subsequent horizon, without using later-horizon losses for selection. We measure transfer relative to the independently tuned cosine envelope in Figure~\ref{fig:hyperparameter-transfer-heatmaps}, with negative values indicating improvement over cosine. Across the evaluated settings, $4\times$ Chinchilla is the earliest common calibration horizon that transfers consistently for both schedules and model sizes, with 1$\times$ also being close to it. For the 150M model, configurations selected at $4\times$ remain close to the cosine envelope through $32\times$: the maximum gaps are $0.0051$ for constant learning rate and $0.0041$ for $1/\sqrt{t}$. For the 300M model, both configurations selected at $4\times$ outperform the cosine envelope at every evaluated horizon from $4\times$ through $16\times$. This yields a practical anytime protocol: tune at an intermediate calibration checkpoint and continue training. 

\subsection{Large Batch Setting}

\begin{figure*}[!tp]
    \centering
\includegraphics[width=1.0\linewidth]{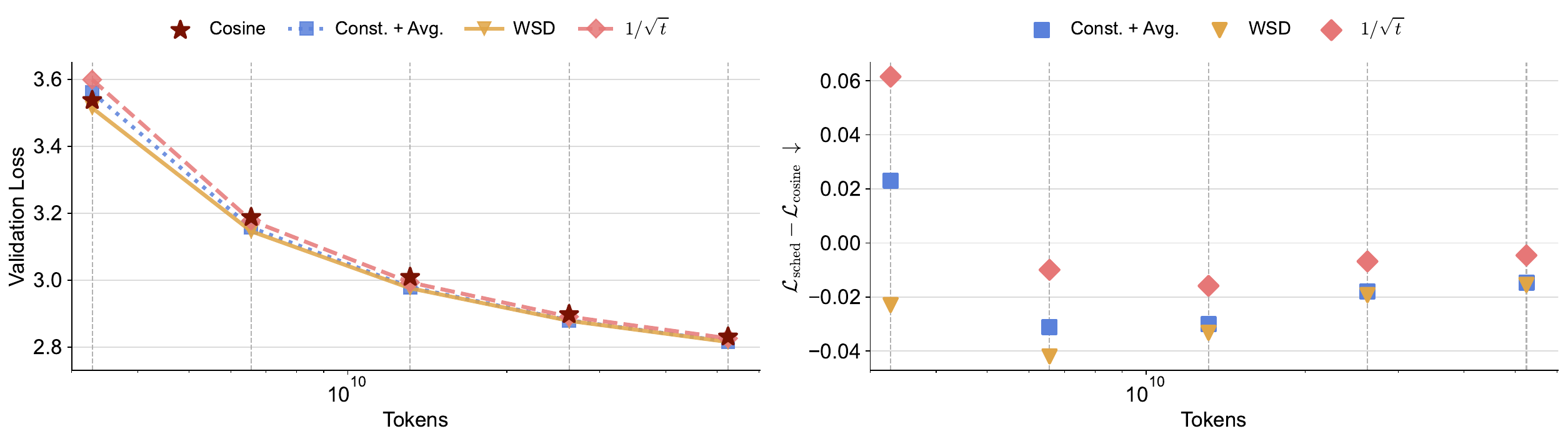}
    \caption{For a 150M-parameter model trained with batch size $4096$ and a 500-step warmup, we compare cosine decay to constant learning rate with averaging, a $\sqrt{\alpha/(t+\alpha)}$ schedule with averaging, and WSD across end times ranging from $1\times$ to $16\times$ Chinchilla compute. \textbf{Left:} Validation loss versus training compute for each schedule. \textbf{Right:} Loss improvement over cosine at each compute multiple, where a negative value is an improvement over cosine. A per-duration optimal hyperparameter version of this plot is in Figure~\ref{fig:large_batch_optimal_hps} (Appendix~\ref{app:additional_figures}).}
\label{fig:large_batch}
\end{figure*}
Based on the quadratic analysis, learning rate decay has the role of reducing variance. However, there is an inherent trade-off: a smaller learning rate has a smaller variance, but also a worse contraction in the bias term. In this section we test whether this theoretical prediction from the quadratic model is also relevant in LLM pretraining: if the variance is already very small, is decay still necessary to get a good final performance? Concretely, in the quadratic regime, SGD can be written as GD plus a batch-noise term:
\[
\theta_{t+1}-\theta^\star \approx (I-\eta H)(\theta_t-\theta^\star) + \eta \xi_t,
\]
where $\E[\xi_t] = 0$ and $\cov(\xi_t) \propto 1/B$. Thus, moving to very large batches suppresses gradient noise, making learning rate decay less necessary for controlling the variance term, and a larger learning rate more advatangeous for faster bias decay. In this section, we repeat the experiment from Figure~\ref{fig:main_figure_both} in a very large-batch regime (batch size $4096$), well beyond the critical batch size (CBS). While this regime is not relevant for practice, the large batch size reduces the variance in the risk linearly, stress testing the predictivity of the quadratic model for LLM pretraining experiments. Figure~\ref{fig:large_batch} is consistent with this prediction, since for $B=4096$, a constant learning rate with averaging is generally better than cosine for all horizons beyond $1\times$ Chinchilla. The $1/\sqrt{t}$ schedule also remains competitive and improves over cosine, but underperforms constant, which is again consistent with unnecessary decay in this regime. Lastly, the difference between constant with averaging and WSD decreases with $N$. We believe this is due to a higher order effect caused by the deterministic edge of stability as studied in \citet{damian2023selfstabilization}. However, we leave the precise study of this phenomenon to future work.

\section{Discussion and Limitations}
\subsection{Limitations} 
\paragraph{Theoretical Limitations.} From a theoretical point of view, our upper bound matches the minimax lower bound exponent established by~\citet{zhang2024optimality} up to constants and logarithmic factors. An important future work would be to remove the $\log N$ factor while still having an anytime scheduler. 
\paragraph{Hyperparameter selection.} Once the hyperparameters such as learning rate are fixed, both the constant and \(1/\sqrt{t}\) schedules are horizon-free: their values at step \(t\) do not depend on the eventual stopping time. For Figure~\ref{fig:main_figure_both}, we select one configuration per scheduler from the completed sweep based on its aggregate performance across the evaluated checkpoints and use that same configuration throughout the entire trajectory, without retuning at individual endpoints. This demonstrates the existence of a single trajectory that remains competitive with independently tuned cosine runs across a broad range of horizons. We additionally evaluate the practical anytime setting by selecting the optimal hyperparameters at \(4\times\) Chinchilla and continuing the same runs unchanged through \(32\times\) for the 150M model and \(16\times\) for the 300M model. The selected configurations remain competitive at subsequent horizons, showing that hyperparameters chosen early in training can transfer to substantially longer runs without advance knowledge of their final duration. However, the $4 \times$ is a heuristic choice, and it is not clear how to choose the point for which to tune the hyperparameters for in order to obtain transfer down the line. We believe one approach to this would be to build scaling laws across the schedulers in order to understand the optimal tuning point, and we leave this important extension to future work.
\paragraph{Scope.} Our empirical conclusions are limited to 150M- and 300M-parameter models trained with AdamW, no weight decay, and the batch-size regimes considered here. In particular, weight decay, alternative optimizers, and substantially larger model scales may change the relative behavior of the schedules.

\subsection{Conclusion} In this work, we study the feasibility of horizon-free learning-rate schedules for LLM pretraining, specifically a constant learning rate and $1/\sqrt{t}$ combined with EMA. Across well-tuned cosine-decay baselines for 150M and 300M parameter models, spanning $1\times$ to $32\times$ (and $1\times$ to $16\times$, respectively) Chinchilla regimes, we find that these anytime schedules are close to cosine (within small gaps) at intermediate checkpoints without requiring knowledge of the total training horizon. This suggests a promising direction for continual and open-ended training.
On the theory side, for linear regression with power-law spectra, we derive convergence rates as a function of source and capacity. We show that under spectral constraints, a schedule of the form $1/t^\gamma$ attains optimal rates up to logarithmic and constant factors, with $1/\sqrt{t}$ as a notable special case. We also compare against WSD empirically: while WSD is not strictly horizon-independent because it relies on additional checkpoints near $90\%$ of the run, it achieves performance comparable to cosine and the proposed anytime schedules. Overall, our results indicate that anytime schedules can be competitive with carefully tuned cosine baselines, emphasize the practical importance of weight averaging in pretraining, and motivate further study of horizon-free optimization.

\section*{Acknowledgements}
AM would like to thank Jacob Zavatone-Veth, Alex Damian, Jingfeng Wu and Mary Letey for helpful discussions. The authors would also like to thank Max Shad and Bala Desinghu for their help with the cluster. AM, DM, PN acknowledge the support of a Kempner Institute Graduate Research Fellowship. CP
is supported by an NSF CAREER Award (IIS-2239780), DARPA grants DIAL-FP-038 and AIQ-HR00112520041, the
Simons Collaboration on the Physics of Learning and Neural Computation, and the William F. Milton Fund from
Harvard University. AM, SK, DM and PN
acknowledge that this work has been made possible in part by a gift from the Chan Zuckerberg
Initiative Foundation to establish the Kempner Institute for the Study of Natural and Artificial
Intelligence. SK and DM acknowledge support from the Office of Naval Research under award N0001422-1-2377 and the National Science Foundation Grant under award \#IIS 2229881. DM is also supported by a Simons Investigator Fellowship, NSF grant DMS-2134157, DARPA grant W911NF2010021,and DOE grant DE-SC0022199.

\bibliography{references}
\bibliographystyle{unsrtnat}

\newpage
\appendix
\onecolumn

\section{Proofs of Section~\ref{sec:theoretical_analysis}}
\label{app:proofs}
\paragraph{Helper lemmas.} Before beginning the main proofs, we state a few helper lemmas. Also note that for the majority of the proofs, we will make heavy use of the inequality:
\begin{align}
    (1-x)^t \leq \exp(-tx)
\end{align}
\begin{lemma}
\label{lem:mvt_bound}
Let $0<\alpha\le \frac12$ and let $1\le j\le i$. Then
\[
i^{-\alpha}(i-j)
\;\lesssim\;
i^{1-\alpha}-j^{1-\alpha}
\;\lesssim\;
j^{-\alpha}(i-j).
\]
\end{lemma}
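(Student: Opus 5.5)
The plan is to reduce both inequalities to the mean value theorem applied to $f(x)=x^{1-\alpha}$ on $[j,i]$. If $i=j$, all three quantities vanish and there is nothing to prove, so assume $j<i$. Since $f$ is differentiable on $(j,i)$ with $f'(x)=(1-\alpha)x^{-\alpha}$, the mean value theorem gives some $\xi\in(j,i)$ with
\[
i^{1-\alpha}-j^{1-\alpha}=(1-\alpha)\,\xi^{-\alpha}\,(i-j).
\]

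The key observation is that $x\mapsto x^{-\alpha}$ is decreasing for $\alpha>0$. Hence $\xi\in(j,i)$ gives $i^{-\alpha}\le \xi^{-\alpha}\le j^{-\alpha}$. Substituting, we obtain
\[
(1-\alpha)\,i^{-\alpha}(i-j)\;\le\; i^{1-\alpha}-j^{1-\alpha}\;\le\;(1-\alpha)\,j^{-\alpha}(i-j).
\]

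It remains to absorb the factor $1-\alpha$ into absolute constants. Because $0<\alpha\le\tfrac12$, we have $\tfrac12\le 1-\alpha<1$. This yields the lower bound with constant $\tfrac12$ and the upper bound with constant $1$, uniformly in $\alpha$, $i$, and $j$, which is exactly the claimed $\lesssim$ statement.

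There is no real obstacle here. The only point needing care is that the hidden constants must not depend on $\alpha$; the restriction $\alpha\le\tfrac12$ is precisely what keeps $1-\alpha$ bounded away from zero, so the lower constant can be taken as $\tfrac12$. The hypothesis $j\ge 1$ merely ensures the powers are well defined and finite; the argument does not otherwise use it.
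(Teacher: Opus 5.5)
Your proof is correct and follows essentially the same route as the paper: the mean value theorem for $f(x)=x^{1-\alpha}$ on $[j,i]$, monotonicity of $x\mapsto x^{-\alpha}$ to sandwich $\xi^{-\alpha}$ between $i^{-\alpha}$ and $j^{-\alpha}$, and $\tfrac12\le 1-\alpha\le 1$ to absorb the constant. Your separate handling of $i=j$ and your remark that the constants are uniform in $\alpha$ are small refinements of the same argument.
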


\begin{proof}
Let $f(x)=x^{1-\alpha}$. By the mean value theorem applied on
$[j,i]$, there exists $c\in[j,i]$ such that
\[
i^{1-\alpha}-j^{1-\alpha}
=
(1-\alpha)c^{-\alpha}(i-j).
\]
Because $x\mapsto x^{-\alpha}$ is decreasing,
\[
(1-\alpha)i^{-\alpha}(i-j)
\le
i^{1-\alpha}-j^{1-\alpha}
\le
(1-\alpha)j^{-\alpha}(i-j).
\]
Since $0<\alpha\le \frac12$, we have $\frac{1}{2} \leq 1-\alpha \leq 1$ which proves the claim.
\end{proof}
\begin{lemma}
    \label{lem:bounded_risk}
    Let $\LLambda = \diag(\lambda_i)$ be a diagonal matrix of
    nonnegative eigenvalues, let $\lambda$ be the vector corresponding
    to its diagonal, and suppose that $(\eta_t)_{t\geq 1}$ is
    nonincreasing and satisfies
    \[
    0 < \eta_t \leq \eta_1 \leq \frac{1}{2 \tr(\H)}.
    \]
    Then, for the recurrence
    \[
    \m_t
    =
    \left(
        \I
        - 2 \eta_t \LLambda
        + 2 \eta_t^2 \LLambda^2
        + \eta_t^2 \lambda \lambda^\top
    \right)\m_{t-1}
    + \eta_t^2 \sigma^2 \lambda,
    \qquad t\geq 1,
    \]
    with $\m_0\geq 0$ componentwise, we have
    \[
    \lambda^\top \m_t
    \leq
    2 \lambda^\top \m_0 + \sigma^2
    \qquad \forall t \geq 0.
    \]
\end{lemma}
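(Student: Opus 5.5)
We prove Lemma~\ref{lem:bounded_risk} by induction on $t$, tracking the scalar $u_t \coloneqq \lambda^\top \m_t$ together with the weighted quantity $v_t \coloneqq \|\lambda\|_1 \cdot \lambda^\top\m_t$. The point is that the rank-one term $\eta_t^2\lambda\lambda^\top$ couples the coordinates only through $u_{t-1}$.

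\textbf{Scalar recursion.} Multiply the recurrence on the left by $\lambda^\top$ and use $\lambda^\top\LLambda^2\m \le \lambda_1\,\lambda^\top\LLambda\m$ together with $\lambda^\top\lambda \le \tr(\H)\,\lambda_1$. Writing $w_{t-1}\coloneqq \lambda^\top\LLambda\m_{t-1}\ge 0$ (nonnegativity of $\m_t$ is preserved since $2\eta_t\lambda_i \le 1$), this gives
\[
u_t \le u_{t-1} - 2\eta_t w_{t-1} + 2\eta_t^2\lambda_1 w_{t-1} + \eta_t^2 \|\lambda\|_2^2\, u_{t-1} + \eta_t^2\sigma^2\|\lambda\|_2^2 .
\]
Since $\eta_t\lambda_1\le 1/2$, the first two $w$ terms combine to at most $-\eta_t w_{t-1}$.

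\textbf{Main obstacle.} The difficulty is that the terms $\eta_t^2\|\lambda\|_2^2(u_{t-1}+\sigma^2)$ are not controlled by $w_{t-1}$ alone: $w$ can be small compared with $u$ when the mass of $\m$ sits on small eigenvalues. Summing them naively gives $\sum\eta_t^2$, which is unbounded for $\gamma\le 1/2$. So a direct one-step invariant on $u$ fails, and we need a potential argument instead.

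\textbf{Unrolling.} We unroll the full vector recursion. Let $\mathbf{D}_t = \I - 2\eta_t\LLambda + 2\eta_t^2\LLambda^2 \preceq \I-\eta_t\LLambda$, which is diagonal with nonnegative entries. Then
\[
\m_t = \prod_{s\le t}\mathbf{D}_s\,\m_0 + \sum_{s\le t}\Big(\prod_{r=s+1}^t \mathbf{D}_r\Big)\eta_s^2\lambda\,(u_{s-1}+\sigma^2),
\]
so
\[
u_t \le u_0 + \sum_{s\le t}\eta_s^2\,(u_{s-1}+\sigma^2)\sum_i \lambda_i^2\prod_{r=s+1}^t(1-\eta_r\lambda_i).
\]
Let $U_t = \max_{s\le t}u_s$. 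The key bound we aim for is
\[
\sum_{s\le t}\eta_s^2\lambda_i^2\prod_{r>s}(1-\eta_r\lambda_i)\le \eta_1\lambda_i \sum_{s}\eta_s\lambda_i\prod_{r>s}(1-\eta_r\lambda_i)\le \eta_1\lambda_i .
\]
The first inequality uses $\eta_s\le\eta_1$, which holds because the schedule is nonincreasing. The second is the telescoping identity $\sum_s x_s\prod_{r>s}(1-x_r) = 1-\prod_r(1-x_r)\le 1$.

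\textbf{Closing the induction.} Summing over $i$ yields
\[
u_t \le u_0 + \eta_1\tr(\H)(U_{t-1}+\sigma^2) \le u_0 + \tfrac12 (U_{t-1}+\sigma^2).
\]
Inductively assume $U_{t-1}\le 2u_0+\sigma^2$. Then
\[
u_t \le u_0 + \tfrac12(2u_0+2\sigma^2) = 2u_0+\sigma^2,
\]
and the base case $t=0$ is trivial. This proves the claim.

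The delicate points are the positivity of $\mathbf{D}_t$, needed to drop the cross terms and to replace each $u_{s-1}$ by $U_{t-1}$, and the telescoping sum, which is exactly where monotonicity of $\eta_t$ enters.
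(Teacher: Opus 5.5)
Your proof is correct and is essentially the paper's argument: unroll with the diagonal $\D_t$, bound the $\m_0$ contribution by $\lambda^\top\m_0$ using $0\le \D_{t,i}\le 1$, show $\sum_s \eta_s^2\lambda^\top\bigl(\prod_{r>s}\D_r\bigr)\lambda\le \eta_1\tr(\H)\le \tfrac12$ by a telescoping sum, and close by strong induction. The only difference is cosmetic: you first dominate $\D_{r,i}\le 1-\eta_r\lambda_i$ and telescope in $\eta_r\lambda_i$, whereas the paper writes $\eta_s^2\lambda_i^2\le \eta_1\lambda_i(1-\D_{s,i})$ and telescopes in $\D_{s,i}$ directly. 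Also, your scalar-recursion detour and $v_t$ are never used, and monotonicity enters only through $\eta_s\le\eta_1$, not through the telescoping identity.
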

\begin{proof}
    We first denote the diagonal matrices
    \[
    \D_t
    =
    \I
    - 2 \eta_t \LLambda
    + 2 \eta_t^2 \LLambda^2.
    \]
    We rewrite the recurrence as
    \[
    \m_t
    =
    \D_t \m_{t-1}
    + \eta_t^2 \lambda
    (\lambda^\top \m_{t-1} + \sigma^2)
    \]
    Unrolling the recurrence gives
    \[
    \m_t
    =
    \left(\prod_{r=1}^t \D_r\right)\m_0
    +
    \sum_{s=1}^t
    \eta_s^2
    \left(\prod_{r=s+1}^t \D_r\right)
    \lambda
    (\lambda^\top \m_{s-1} + \sigma^2),
    \]
    where an empty product is understood to be $\I$.

    Dotting with $\lambda$, we have
    \[
    \lambda^\top \m_t
    =
    \lambda^\top
    \left(\prod_{r=1}^t \D_r\right)\m_0
    +
    \sum_{s=1}^t
    \eta_s^2
    \lambda^\top
    \left(\prod_{r=s+1}^t \D_r\right)
    \lambda
    (\lambda^\top \m_{s-1} + \sigma^2).
    \]
    First, note that for each diagonal entry of $\D_t$, we have
    \[
    0
    \leq
    \D_{t,i}
    =
    1 - 2\eta_t\lambda_i + 2\eta_t^2\lambda_i^2
    =
    (1-\eta_t\lambda_i)^2 + \eta_t^2\lambda_i^2
    \leq 1.
    \]
    This implies that, elementwise,
    \begin{align}
        \label{eqn:first_thing}
        \lambda^\top
        \left(\prod_{r=1}^t \D_r\right)\m_0
        \leq
        \lambda^\top \m_0.
    \end{align}

    Next, we look at the other term. Since
    \[
    1-\D_{s,i}
    =
    2\eta_s\lambda_i(1-\eta_s\lambda_i),
    \]
    and $\eta_s\lambda_i\leq 1/2$, we have
    \[
    \eta_s^2\lambda_i^2
    =
    \frac{\eta_s\lambda_i}
         {2(1-\eta_s\lambda_i)}
    (1-\D_{s,i})
    \leq
    \eta_1\lambda_i(1-\D_{s,i}).
    \]
    Therefore,
    \begin{align}
        \label{eqn:second_thing}
        &
        \sum_{s=1}^t
        \eta_s^2
        \lambda^\top
        \left(\prod_{r=s+1}^t \D_r\right)
        \lambda
        \nonumber\\
        &\leq
        \eta_1
        \sum_i \lambda_i
        \sum_{s=1}^t
        (1-\D_{s,i})
        \prod_{r=s+1}^t \D_{r,i}
        \nonumber\\
        &=
        \eta_1
        \sum_i \lambda_i
        \left(
            1-\prod_{r=1}^t \D_{r,i}
        \right)\\
        &\leq
        \eta_1 \tr(\H)
        \leq
        \frac{1}{2}.
    \end{align}

    We prove the final statement inductively. Define the hypothesis
    \[
    P(t):
    \lambda^\top \m_t
    \leq
    2\lambda^\top \m_0 + \sigma^2.
    \]

    The base case $P(0)$ is trivially true. Now suppose $P(k)$ is true
    for all $k<t$. We want to prove $P(t)$. We have
    \begin{align*}
        \lambda^\top \m_t
        &=
        \lambda^\top
        \left(\prod_{r=1}^t \D_r\right)\m_0
        +
        \sum_{s=1}^t
        \eta_s^2
        \lambda^\top
        \left(\prod_{r=s+1}^t \D_r\right)
        \lambda
        (\lambda^\top \m_{s-1}+\sigma^2)
        \\
        &\leq
        \lambda^\top \m_0
        +
        \sum_{s=1}^t
        \eta_s^2
        \lambda^\top
        \left(\prod_{r=s+1}^t \D_r\right)
        \lambda
        (\lambda^\top \m_{s-1}+\sigma^2)
        && \eqref{eqn:first_thing}
        \\
        &\leq
        \lambda^\top \m_0
        +
        (2\lambda^\top \m_0+2\sigma^2)
        \sum_{s=1}^t
        \eta_s^2
        \lambda^\top
        \left(\prod_{r=s+1}^t \D_r\right)
        \lambda
        && P(s-1)
        \\
        &\leq
        2\lambda^\top \m_0+\sigma^2
        && \eqref{eqn:second_thing}.
    \end{align*}
    This proves $P(t)$ and completes the induction.
\end{proof}

\paragraph{Setup.} Before proceeding to the proofs, we briefly restate the notation and setup from Section~\ref{sec:theoretical_analysis}. Denote the independent and identically distributed input output pairs $\{(\x_i, y_i)\}_{i \geq 1}$ where each $\x_i$ and $y_i$ are $\x_i \sim \N(0, \H)$ and $y_i = \x_i^\top w^\star + \epsilon_i$ for independent $\epsilon_i \sim \N(0, \sigma^2)$. We take $\w_0=0$ as the initialization and number updates from $t=1$, so update $t$ uses $(\x_t,y_t)$ and produces $\w_t$. Here, $s\geq1$ is the step from which we start tail averaging. Define the risk to be $\risk(\w) = \frac{1}{2} \E_{\x, y} (\la \w, \x \ra - y)^2$ where the expectation is over the joint $(\x, y) \sim \mathcal{D}$. Consider the learning rate schedule
\begin{align*}
    \eta_t = \frac{\eta}{t^\gamma} \qquad t \geq 1
\end{align*}
for $\gamma \in (0, 1/2]$ and a constant $0 < \eta \leq 1/{2\tr(\H)}$. Denote the eigendecomposition of $\H = \Q \LLambda \Q^\top$ and let $\SSigma_i = \E [(\w_i - \w^\star)(\w_i - \w^\star)^\top]$ be the second moment of the iterates. We also introduce $\M_t = \Q^\top \SSigma_t \Q$ as the covariance of the iterates rotated in the eigenbasis of the data, and $\m_t = \diag(\M_t)$ as its diagonal. Note that for $\mathbf{A}, \mathbf{B}$ matrices we have $\la \mathbf{A}, \mathbf{B} \ra = \tr(\mathbf{A}^\top \mathbf{B})$. We also denote the induced norm $\|\w\|_{\mathbf{A}}^2 = \w^\top \mathbf{A}\w$. For a diagonal matrix we define its restriction to entries between $i$ and $j$ where $1 \leq i \leq j \leq \infty$ as $\LLambda_{i:j} = \diag(\lambda_i, \dots, \lambda_j)$. Notationally, for $\w$ and $\v$ multidimensional quantities (vectors or matrices), we mean $\w \leq \v$ as componentwise comparison, and we will use $\preceq$ to denote the Loewner order comparisons.

\subsection{Proof of Theorem~\ref{thm:tgamma_rate}}
\label{sec:proof_thm1}
\begin{proof}[Proof of Theorem~\ref{thm:tgamma_rate}]
We begin with deriving the expression for the risk:
\begin{align*}
    \risk(\bw_{s:s+N})
    &= \frac{1}{2}\E\left[\big(\la \bw_{s:s+N} - \w^\star, \x \ra - \epsilon\big)^2\right].
\end{align*}
Expanding and using $\E[\epsilon]=0$,
\begin{align*}
    \risk(\bw_{s:s+N})
    &= \frac{1}{2}\E\left[\la \bw_{s:s+N} - \w^\star, \x \ra^2\right]
    + \frac{\sigma^2}{2}.
\end{align*}
Substituting $\bw_{s:s+N} = \frac{1}{N}\sum_{i=s}^{s+N-1}\w_i$,
\begin{align*}
    \risk(\bw_{s:s+N})
    &= \frac{1}{2N^2}\sum_{i=s}^{s+N-1}\sum_{j=s}^{s+N-1}
    \E\left[
        \x^\top(\w_i-\w^\star)(\w_j-\w^\star)^\top \x
    \right]
    + \frac{\sigma^2}{2} \\
    &\le \frac{1}{N^2}\sum_{i=s}^{s+N-1}\sum_{j=i}^{s+N-1}
    \E\left[
        \x^\top(\w_i-\w^\star)(\w_j-\w^\star)^\top \x
    \right]
    + \frac{\sigma^2}{2}.
\end{align*}

To make progress, we invoke the tower rule:
\begin{align*}
    \E\left[(\w_j-\w^\star)(\w_i-\w^\star)^\top\right]
    &= \E\left[\E[\w_j-\w^\star \mid \w_i]\;(\w_i-\w^\star)^\top\right].
\end{align*}

For SGD with stepsizes $\eta_t$ we have:
\begin{align*}
    \E[\w_j-\w^\star \mid \w_i]
    &= \prod_{t=i+1}^{j}(\I-\eta_t \H)(\w_i-\w^\star)
\end{align*}
Since $\eta \lambda_i \leq 1$ for all $\lambda_i$ we have the PSD inequality (the factors commute since they have the same $\H$ basis):
\[
\prod_{t=i+1}^{j}(\I-\eta_t \H) \preceq \exp\ \qty(-\H\sum_{t=i+1}^{j}\eta_t) \preceq \exp\qty(-\eta [j^{1-\gamma} - i^{1-\gamma}]\H)
\]
where the last inequality is from Lemma~\ref{lem:mvt_bound}. Plugging this inside of the risk expression and diagonalizing we have the following risk inequality:
\begin{align}
    \label{eq:risk}
    \risk(\bw_{s:s+N}) - \frac{\sigma^2}{2}
    \lesssim
    \frac{1}{N^2}\sum_{i=s}^{s+N-1}\sum_{j=i}^{s+N-1}
    \la \m_i,\ \exp\Big(-\eta(j^{1-\gamma}-i^{1-\gamma})\LLambda\Big)\lambda \ra
\end{align}
To get an expression for the bias and variance iterates, denoted as $\bm_t$ and $\vm_t$ respectively, we follow the derivation from~\citet{meterez2025simplified}. At batch size $1$, the SGD update is
\begin{align*}
    \w_t
    &= \w_{t-1} - \eta_t \x_t(\x_t^\top \w_{t-1} - y_t),
    \qquad t\geq1,
    \qquad
    y_t = \x_t^\top \w^\star + \epsilon_t ,
\end{align*}
where $\epsilon_t$ is independent noise with $\E[\epsilon_t]=0$ and
$\E[\epsilon_t^2]=\sigma^2$.
Subtracting $\w^\star$,
\begin{align*}
    \w_t-\w^\star
    &= (\I-\eta_t \x_t\x_t^\top)(\w_{t-1}-\w^\star)
    + \eta_t \epsilon_t \x_t .
\end{align*}
Taking the covariance of this term we obtain a recursion on $\SSigma_t$:
\begin{align*}
    \SSigma_t
    &\coloneqq \E[(\w_t-\w^\star)(\w_t-\w^\star)^\top] \\
    &=
    \E\left[
        (\I-\eta_t \x\x^\top)\SSigma_{t-1}(\I-\eta_t \x\x^\top)
    \right]
    + \eta_t^2\sigma^2 \E[\x\x^\top].
\end{align*}
Taking expectation yields:
\begin{align*}
    \SSigma_t = \SSigma_{t-1} - \eta_t \SSigma_{t-1} \H - \eta_t \H \SSigma_{t-1} + 2 \eta_t^2 \H \SSigma_{t-1} \H + \eta_t^2 \tr(\H \SSigma_{t-1}) \H + \eta_t^2 \sigma^2 \H
\end{align*}
Rotating in the $\Q$ basis and taking a diagonal operator through the whole equation we end up with a recursion on $\m_t$:
\begin{align*}
    \m_t
    &=
    \left(
        \I
        - 2\eta_t\LLambda
        + 2\eta_t^2\LLambda^2
        + \eta_t^2\lambda\lambda^\top
    \right)\m_{t-1}
    + \sigma^2\eta_t^2\lambda \\
    &\leq \left(
        \I
        - 2\eta_t\LLambda
        + \eta_t^2\LLambda^2
        + 2\eta_t^2\lambda\lambda^\top
    \right)\m_{t-1}
    + \sigma^2\eta_t^2\lambda && \LLambda^2 \leq \lambda \lambda^\top \\
    &\leq
    (\I-\eta_t\LLambda)^2\m_{t-1}
    + 2\eta_t^2 \lambda (2 \lambda^\top \m_0 + \sigma^2) + \sigma^2\eta_t^2\lambda
    && \text{Lemma~\ref{lem:bounded_risk}} \\
    &\lesssim (\I-\eta_t\LLambda)^2\m_{t-1} + 
    7 \sigma^2\eta_t^2\lambda && \|\w_0 - \w^\star\|_\H^2 \lesssim \sigma^2
\end{align*}
For simplicity, we will absorb the constant $7$ into the noise scale $\sigma^2$ for the remainder of this proof. Unrolling the recursion yields
\begin{align}
    \m_t
    &\leq
    \Bigg[\prod_{i=1}^t (I-\eta_i \LLambda)^2\Bigg]\m_0
    + \sum_{p=1}^t \eta_p^2\sigma^2
    \Bigg[\prod_{r=p+1}^t (I-\eta_r \LLambda)^2\Bigg]\lambda \\
    &\le
    \exp\Big[-2\LLambda \sum_{i=1}^t \eta_i\Big]\m_0
    + \sigma^2 \sum_{p=1}^t \eta_p^2
    \exp\Big[-2\LLambda \sum_{r=p+1}^t \eta_r\Big]\lambda
    \label{eq:bias_and_variance_eta}
\end{align}
Plugging in the scheduler for $\eta_t$, we have:
\begin{align*}
    \m_t
    &\lesssim
    \exp\big[-2\eta \LLambda t^{1-\gamma}\big]\m_0
    + \eta^2\sigma^2 \sum_{p=1}^t \frac{1}{p^{2\gamma}}
    \exp\Big[-2\eta\LLambda (t^{1-\gamma}-p^{1-\gamma})\Big]\lambda .
\end{align*}

We therefore define the bias and variance iterates as
\begin{align}
    \bm_t
    &\coloneqq
    \exp\big[-2\eta \LLambda t^{1-\gamma}\big]\m_0, \\
    \vm_t
    &\coloneqq
    \eta^2\sigma^2 \sum_{p=1}^t \frac{1}{p^{2\gamma}}
    \exp\Big[-2\eta\LLambda (t^{1-\gamma}-p^{1-\gamma})\Big]\lambda .
\end{align}
To obtain the rates, we independently bound each of these quantities after plugging them back in~\eqref{eq:risk}. 
\paragraph{Bias bound.} We begin with the bias:
\begin{align*}
    \bias_t
    &\lesssim
    \sum_{k} \frac{\m_{0,k}\lambda_k}{N^2}
    \sum_{i=s}^{s+N-1} \sum_{j=i}^{s+N-1}
    \exp\Big[-\eta(j^{1-\gamma}-i^{1-\gamma})\lambda_k
               -2\eta \lambda_ki^{1-\gamma}\Big] \\
    &\le
    \sum_{k} \frac{\m_{0,k}\lambda_k}{N^2}
    \sum_{i=s}^{s+N-1} \sum_{j=i}^{s+N-1}
    \exp\Big[-\eta(j^{1-\gamma}-i^{1-\gamma})\lambda_k
               -\eta \lambda_k i^{1-\gamma}\Big] \\
    &=
    \sum_{k} \frac{\m_{0,k}\lambda_k}{N^2}
    \sum_{i=s}^{s+N-1} \sum_{j=i}^{s+N-1}
    \exp\big[-\eta \lambda_k j^{1-\gamma}\big].
\end{align*}

Choose
\[
k^\star
\coloneqq
\max\Big\{k:\ \lambda_k \ge \tfrac{\log N}{\eta s^{1-\gamma}}\Big\},
\]
which defines the split between head and tail eigenvalues.

\begin{align*}
    \bias_t^{1:k^\star}
    &\lesssim
    \sum_{k \le k^\star} \frac{\m_{0,k}\lambda_k}{N^2}
    \sum_{i=s}^{s+N-1} \sum_{j=i}^{s+N-1}
    \exp\big[-\eta \lambda_k j^{1-\gamma}\big] \\
    &\le
    \sum_{k \le k^\star} \frac{\m_{0,k}\lambda_k}{N^2}
    \sum_{i=s}^{s+N-1} \sum_{j=i}^{s+N-1}
    \exp\Big[-\Big(\tfrac{j}{s}\Big)^{1-\gamma}\log N\Big] \\
    &\lesssim
    \sum_{k \le k^\star} \frac{\m_{0,k}\lambda_k}{N^2}
    \sum_{i=s}^{s+N-1} \sum_{j=i}^{s+N-1} N^{-1} \\
    &\lesssim
    \sum_{k \le k^\star} \frac{\m_{0,k}\lambda_k}{N}.
\end{align*}

For the tail eigenvalues, we upper bound the exponential by \(1\), yielding
\begin{align*}
    \bias_t^{k^\star:\infty}
    &\lesssim
    \sum_{k > k^\star} \m_{0,k}\lambda_k .
\end{align*}

Setting $\w_0 = 0$, we have the final bias bound:
\begin{align}
    \bias_t \lesssim \frac{1}{N} \|\w^\star\|_{\LLambda_{1:k^\star}}^2 + \|\w^\star\|_{\LLambda_{k^\star:\infty}}^2
\end{align}

\paragraph{Variance bound.} We now turn our attention to bounding the variance. Following a similar procedure as before, we have:
\begin{align*}
    \variance_t &\lesssim \sum_{k} \frac{\eta^2 \sigma^2 \lambda_k^2}{N^2} \sum_{i=s}^{s+N-1} \sum_{j=i}^{s+N-1} \sum_{p=1}^{i} \frac{1}{p^{2\gamma}} \exp\qty[-2\eta \lambda_k (i^{1-\gamma} - p^{1-\gamma}) -\eta \lambda_k(j^{1-\gamma} - i^{1-\gamma})] \\
    &\leq \sum_{k}\frac{\eta^2 \sigma^2 \lambda_k^2}{N^2} \sum_{i=s}^{s+N-1} \sum_{j=i}^{s+N-1} \sum_{p=1}^{i} \frac{1}{p^{2\gamma}} \exp\qty[-\eta \lambda_k (j^{1-\gamma} - p^{1-\gamma})] \\
    &= \sum_{k}\frac{\eta^2 \sigma^2 \lambda_k^2}{N^2} \sum_{i=s}^{s+N-1} \sum_{j=i}^{s+N-1} \sum_{p=1}^{i} \frac{1}{p^{2\gamma}} \exp\qty[-\eta \lambda_k (j^{1-\gamma} - i^{1-\gamma} + i^{1-\gamma} - p^{1-\gamma})] \\
    &= \sum_{k}\frac{\eta^2 \sigma^2 \lambda_k^2}{N^2} \sum_{i=s}^{s+N-1} \sum_{j=i}^{s+N-1} \exp\qty[-\eta\lambda_k (j^{1-\gamma} - i^{1-\gamma})]\sum_{p=1}^{i} \frac{1}{p^{2\gamma}} \exp\qty[-\eta \lambda_k(i^{1-\gamma} - p^{1-\gamma})]
\end{align*}

Now we can handle the last sum. We split it at $i/2$ and handle each sum independently. We begin with the first half:

\begin{align*}
    \sum_{p=1}^{i/2} \frac{1}{p^{2\gamma}} \exp\qty[-\eta \lambda_k(i^{1-\gamma} - p^{1-\gamma})] &\lesssim \sum_{p=1}^{i/2} \frac{1}{p^{2\gamma}} \exp\qty[-\eta \lambda_k i^{-\gamma}(i-p)] & \text{Lemma~\ref{lem:mvt_bound}} \\
    &\lesssim \exp\qty(-\frac{1}{2}\eta \lambda_k i^{1-\gamma}) \sum_{p=1}^{i/2} \frac{1}{p^{2\gamma}} \\
    &\lesssim \exp\qty(-\frac{1}{2}\eta \lambda_k i^{1-\gamma}) \cdot
        \begin{cases}
    i^{1 - 2\gamma} & \gamma < 1/2\\
    \log{i} & \gamma=1/2 
    \end{cases}
\end{align*}

Now we look at the second half. We have:
\begin{align*}
     \sum_{p=i/2}^{i} \frac{1}{p^{2\gamma}} \exp\qty[-\eta \lambda_k(i^{1-\gamma} - p^{1-\gamma})] &\lesssim i^{-2\gamma} \exp\qty(-\eta \lambda_k i^{1-\gamma}) \sum_{p=i/2}^i \exp\qty(\eta \lambda_k i^{-\gamma}p) \\
     &\lesssim \frac{i^{-\gamma}}{\eta \lambda_k} \qty[1 - \exp\qty(-\frac{1}{2} \eta \lambda_k i^{1-\gamma})]
\end{align*}

Now that we have obtained the bounds for the sum over $p$, remains to bound the sum over $j$:
\begin{align*}
     \sum_{j=i}^{s+N-1} \exp\qty[-\eta\lambda_k (j^{1-\gamma} - i^{1-\gamma})] &\lesssim \sum_{j=i}^{s+N-1} \exp\qty[-\eta\lambda_k j^{-\gamma}(j-i)] & \text{Lemma~\ref{lem:mvt_bound}}\\
     &\lesssim \sum_{j=i}^{s+N-1} \exp\qty[-\eta\lambda_k (s+N)^{-\gamma}(j-i)] \\
     &\leq \sum_{q=0}^{N-1} \exp\qty[-\eta\lambda_k (s+N)^{-\gamma}q] \\
     &\lesssim \frac{(s+N)^\gamma}{\eta \lambda_k} \qty(1 - \exp\qty[-\eta \lambda_k \frac{N}{(s+N)^\gamma}])
\end{align*}

Assembling everything together we end up with:
\begin{align*}
\variance_t \lesssim\; &
\sum_{k}\frac{\eta^2 \sigma^2 \lambda_k^2}{N^2}
\sum_{i=s}^{s+N-1} \sum_{j=i}^{s+N-1}
\exp\left[-\eta\lambda_k \left(j^{1-\gamma} - i^{1-\gamma}\right)\right]
\sum_{p=1}^{i} \frac{1}{p^{2\gamma}}
\exp\left[-\eta \lambda_k\left(i^{1-\gamma} - p^{1-\gamma}\right)\right] \\
\lesssim\; &
\sum_{k}\frac{\eta^2 \sigma^2 \lambda_k^2}{N^2}
\sum_{i=s}^{s+N-1}\frac{(s+N)^\gamma}{\eta \lambda_k}
\left(1 - \exp\left[-\eta \lambda_k \frac{N}{(s+N)^\gamma}\right]\right)
\Bigg[
\exp\left(-\frac{1}{2}\eta \lambda_k i^{1-\gamma}\right) s_i(\gamma)
\\&+ \frac{i^{-\gamma}}{\eta \lambda_k}
\left(1 - \exp\left(-\frac{1}{2} \eta \lambda_k i^{1-\gamma}\right)\right)
\Bigg].
\end{align*}

where we have defined:
\begin{align*}
    s_i(\gamma) = \begin{cases}
    i^{1 - 2\gamma} & \gamma < 1/2\\
    \log{i} & \gamma=1/2 
    \end{cases}
\end{align*}

Recall the cutoff
\[
k^\star \coloneqq \max\Big\{k:\ \lambda_k \ge \tfrac{\log N}{\eta s^{1-\gamma}}\Big\}.
\]
We split the sum over \(k\) into head and tail components.

\paragraph{Variance head.}
For \(k \le k^\star\) and all \(i \ge s\), we have
\[
\eta \lambda_k i^{1-\gamma} \ge \eta \lambda_k s^{1-\gamma} \ge \log N,
\]
so the exponential term in the \(p\)-sum decays rapidly. Consequently, the sum
over \(p\) is dominated by its second half, yielding the bound
\[
\sum_{p=1}^{i} \frac{1}{p^{2\gamma}}
\exp\Big[-\eta \lambda_k(i^{1-\gamma} - p^{1-\gamma})\Big]
\;\lesssim\;
\frac{i^{-\gamma}}{\eta \lambda_k}.
\]
Moreover,
\(
1 - \exp[-\eta \lambda_k \tfrac{N}{(s+N)^\gamma}] \le 1
\).
Thus, we get:
\begin{align*}
\variance_t^{1:k^\star}
&\lesssim
\sum_{k\le k^\star}
\frac{\eta^2 \sigma^2 \lambda_k^2}{N^2}
\sum_{i=s}^{s+N-1}
\frac{(s+N)^\gamma}{\eta \lambda_k}
\cdot
\frac{i^{-\gamma}}{\eta \lambda_k} \\
&=
\sigma^2
\sum_{k\le k^\star}
\frac{(s+N)^\gamma}{N^2}
\sum_{i=s}^{s+N-1} i^{-\gamma}.
\end{align*}
Using \(\sum_{i=s}^{s+N-1} i^{-\gamma} \lesssim N s^{-\gamma}\), we conclude
\begin{align*}
\variance_t^{1:k^\star}
&\lesssim
\sigma^2
\frac{(s+N)^\gamma}{N s^\gamma}\, k^\star
\label{eq:var_head_clean}
\end{align*}

\paragraph{Variance tail.}
For \(k > k^\star\), we upper bound all exponentials by \(1\) and use
\(1-e^{-x}\le x\) with \(x=\eta\lambda_k \tfrac{N}{(s+N)^\gamma}\). Starting from the assembled bound,
\begin{align*}
\variance_t
&\lesssim
\sum_{k}\frac{\eta^2 \sigma^2 \lambda_k^2}{N^2} \sum_{i=s}^{s+N-1}
\frac{(s+N)^\gamma}{\eta \lambda_k}
\qty(1 - \exp\qty[-\eta \lambda_k \tfrac{N}{(s+N)^\gamma}])\\
&\qty[\exp\qty(-\tfrac{1}{2}\eta \lambda_k i^{1-\gamma}) s_i(\gamma)
+ \frac{i^{-\gamma}}{\eta \lambda_k} \qty[1 - \exp\qty(-\tfrac{1}{2} \eta \lambda_k i^{1-\gamma})]] ,
\end{align*}
we obtain, for \(k>k^\star\),
\begin{align*}
\variance_t^{k^\star:\infty}
&\lesssim
\sum_{k>k^\star}\frac{\eta^2 \sigma^2 \lambda_k^2}{N^2} \sum_{i=s}^{s+N-1}
\frac{(s+N)^\gamma}{\eta \lambda_k}
\qty(\eta \lambda_k \tfrac{N}{(s+N)^\gamma})
\qty[s_i(\gamma) + \frac{i^{-\gamma}}{\eta \lambda_k}] \\
&=
\sum_{k>k^\star}\frac{\eta^2 \sigma^2 \lambda_k^2}{N} \sum_{i=s}^{s+N-1}
\qty[s_i(\gamma) + \frac{i^{-\gamma}}{\eta \lambda_k}] \\
&=
\sigma^2
\sum_{k>k^\star}
\left(
    \eta^2 \lambda_k^2\sum_{i=s}^{s+N-1} s_i(\gamma)
    + \eta \lambda_k \sum_{i=s}^{s+N-1} i^{-\gamma}
\right)\frac{1}{N}.
\end{align*}
Bounding the remaining sums as
\(
\frac{1}{N}\sum_{i=s}^{s+N-1} i^{-\gamma}\lesssim s^{-\gamma}
\)
and
\(
\frac{1}{N}\sum_{i=s}^{s+N-1} s_i(\gamma)\lesssim s^{1-2\gamma}
\)
(for \(\gamma=\tfrac12\) this incurs an additional \(\log N\) factor),
we conclude
\begin{align}
\variance_t^{k^\star:\infty}
\lesssim
\sigma^2
\sum_{k>k^\star}
\left(
    \eta^2 \lambda_k^2\, s^{1-2\gamma}
    + \eta \lambda_k\, s^{-\gamma}
\right)
\end{align}
Combining the 2 bounds we get the final variance bound:
\begin{align}
    \variance_t \lesssim \sigma^2
\frac{(s+N)^\gamma}{N s^\gamma}\, k^\star + \sigma^2
\sum_{k>k^\star}
\left(
    \eta^2 \lambda_k^2\, s^{1-2\gamma}
    + \eta \lambda_k\, s^{-\gamma}
\right)
\end{align}
\end{proof}
\subsection{Proof of Corollary 1}
\begin{proof}[Proof of Corollary~\ref{cor:optimal_hps}]
    From Theorem~\ref{thm:tgamma_rate} we know that for $s = \Theta(N)$, for $k^\star
    \coloneqq
    \max\Big\{k:\ \lambda_k \ge \tfrac{\log N}{\eta N^{1-\gamma}}\Big\}$ the bias and variance decay rates are:
    \begin{align*}
        \bias_t &\lesssim \frac{1}{N} \|\w^\star\|_{\LLambda_{1:k^\star}}^2 + \|\w^\star\|_{\LLambda_{k^\star:\infty}}^2\\
        \variance_t &\lesssim \frac{\sigma^2 k^\star}{N}
        + \sigma^2
        \sum_{k>k^\star}
        \left(
            \eta^2 \lambda_k^2\, N^{1-2\gamma}
            + \eta \lambda_k\, N^{-\gamma}
        \right)
    \end{align*}

    We first specialize the bias bound to the power law setting. Plugging in the source and capacity assumptions, we have:
    \begin{align*}
        \|\w^\star\|_{\LLambda_{1:k^\star}}^2 &\leq \sum_{k\leq k^\star} \lambda_k (w_k^\star)^2 \lesssim \sum_{k\leq k^\star} k^{-b} \lesssim \sum_{k = 1}^\infty k^{-b} \eqsim 1 \qquad b > 1\\
        \|\w^\star\|_{\LLambda_{k^\star:\infty}}^2 &\lesssim \sum_{k>k^\star} k^{-b} \leq \int_{k^\star}^\infty k^{-b} dk\lesssim (k^\star)^{1-b}
    \end{align*}

    For the variance term, we can see that the variance head is the dominating term. By definition, we have  $\lambda_k \eqsim k^{-a}$ with $a>1$. Then:
\[
\sum_{k>k^\star}\lambda_k \lesssim (k^\star)^{1-a},
\qquad
\sum_{k>k^\star}\lambda_k^2 \lesssim (k^\star)^{1-2a}.
\]
Moreover, by definition of $k^\star$ we have $\lambda_{k^\star}\eqsim (k^\star)^{-a}\eqsim \frac{\log{N}}{\eta N^{1-\gamma}}$, where the $\eqsim$ notation absorbs constants and $\log$ factors. Thus, for the 2 terms in the variance tail, we have:
\begin{align*}
\sigma^2\eta N^{-\gamma}\sum_{k>k^\star}\lambda_k
&\lesssim
\sigma^2\eta N^{-\gamma}(k^\star)^{1-a}
=
\frac{\sigma^2 k^\star}{N}\,\Big(\eta N^{1-\gamma}(k^\star)^{-a}\Big)
\lesssim
\frac{\sigma^2 k^\star}{N}\log{N} \lesssim \frac{\sigma^2 k^\star}{N}\\
\sigma^2\eta^2 N^{1-2\gamma}\sum_{k>k^\star}\lambda_k^2
&\lesssim
\sigma^2\eta^2 N^{1-2\gamma}(k^\star)^{1-2a}
=
\frac{\sigma^2 k^\star}{N}\,\Big(\eta N^{1-\gamma}(k^\star)^{-a}\Big)^2
\lesssim
\frac{\sigma^2 k^\star}{N} \log^{2}{N} \lesssim \frac{\sigma^2 k^\star}{N}
\end{align*}

where we absorbed the $\log$ factors into the $\lesssim$ notation. Plugging everything in, we have the bound on the risk:
\begin{align*}
    \E\risk(\bw_{s:s+N}) - \frac{\sigma^2}{2} \lesssim \frac{1}{N} + (k^\star)^{1-b} + \frac{\sigma^2 k^\star}{N}
\end{align*}

Optimizing the bound over $\gamma$ is equivalent to optimizing over $k^\star$, since $k^\star$ is a function of $\gamma$. We can optimize up to constants by balancing the $k^\star$ dependent terms in the bound:
\begin{align*}
    (k^\star)^{1-b} \eqsim \frac{\sigma^2 k^\star}{N}
    \implies
    (k^\star)^{b} \eqsim \frac{N}{\sigma^2}
    \implies
    k^\star \eqsim N^{1/b}
\end{align*}

Combining with the definition of $k^\star$ we have $\lambda_{k^\star} \eqsim (k^\star)^{-a} \eqsim \frac{\log{N}}{\eta N^{1-\gamma}}$. Plugging in the expression for $k^\star$ we have and dropping the $\log$ factor and the constant $\eta$ we have:
\begin{align*}
    N^{(1-\gamma)/a} \eqsim k^\star \eqsim N^{1/b}
    \implies
    \gamma^\star = 1 - \frac{a}{b} \in (0, \frac{1}{2}]
\end{align*}

for $1 < a < b \leq 2a$. Thus the optimal exponent is $\gamma^\star$ for the rate. This yields:
\[
(k^\star)^{1-b} \eqsim N^{-(b-1)/b}
\]

Since $b>1$, we have that $1/N \leq N^{-(b-1)/b}$, and thus the final rate is:
\[
\E\risk(\bw_{s:s+N}) - \frac{\sigma^2}{2} \lesssim N^{-(b-1)/b}
\]
\end{proof}
\clearpage

\clearpage
\section{Additional Figures}
\label{app:additional_figures}
\subsection{Figure~\ref{fig:main_figure_both} at optimal hyperparameters}
We provide the losses of Figure~\ref{fig:main_figure_both} at the optimal hyperparameters for every intermediate point. Note that this is not a single run - we plot the optimal loss (over our sweep) at each intermediate point from $1 \times$ to $32 \times$ (and $16\times$ respectively for the 300M model) Chinchilla and interpolate between the points.

\begin{figure*}[!htp]
    \centering
    \includegraphics[width=1.0\linewidth]{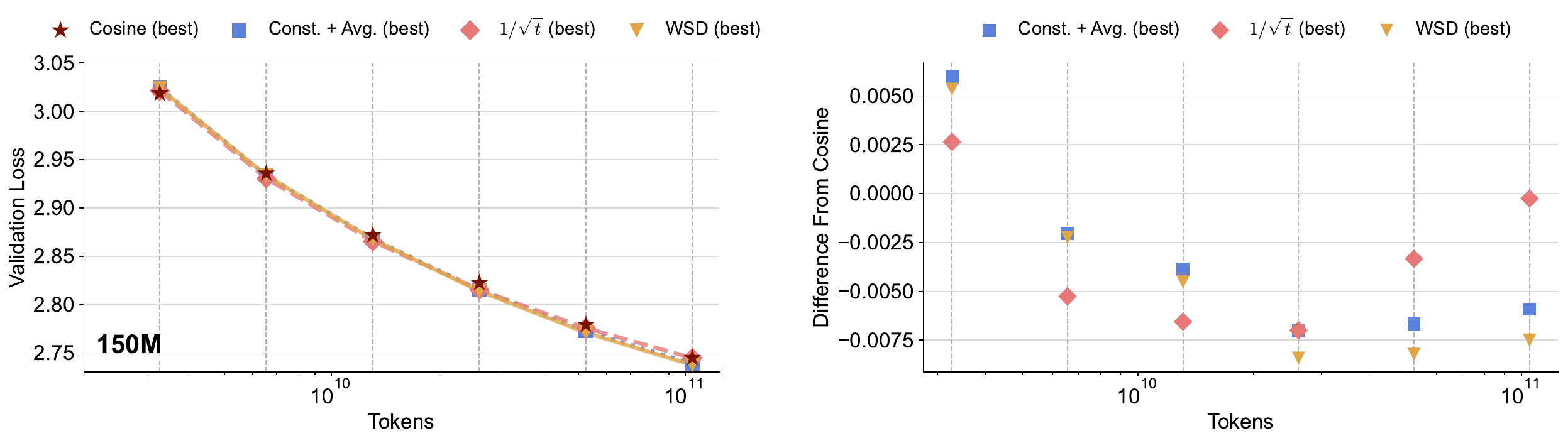}
    \vspace{0.5em}
    \includegraphics[width=1.0\linewidth]{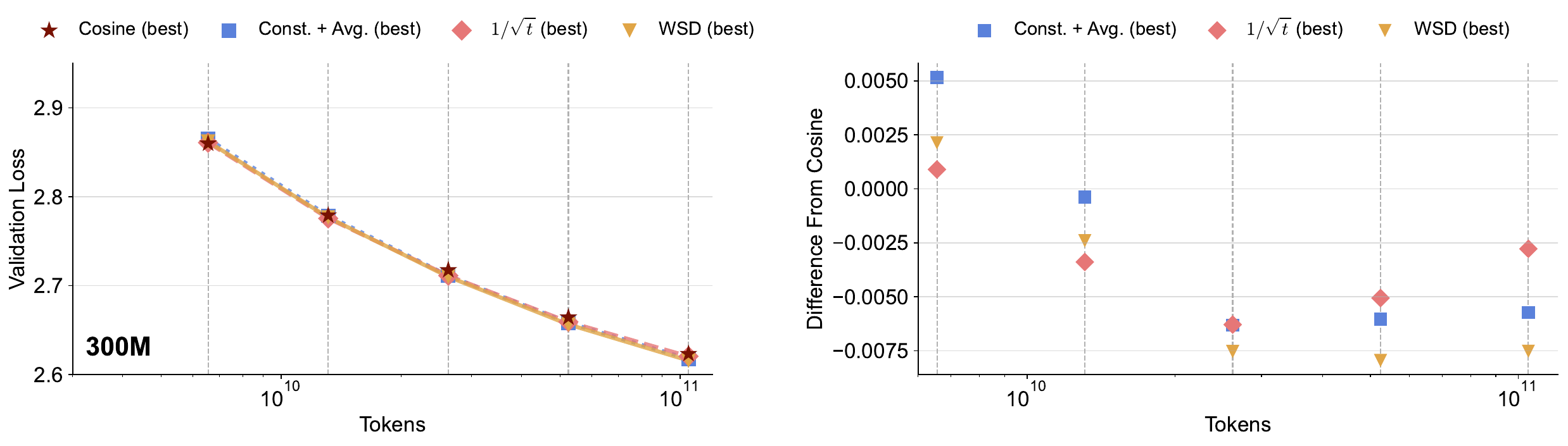}
  \caption{Top plots correspond to 150M parameter, and bottom plots correspond to 300M models. At each intermediate point, we plot the best loss out of the whole hyperparameter sweep, then we linearly interpolate between the points. Note that for long training durations, constant with averaging and $1/\sqrt{t}$ offers a substantial improvement over cosine decay.}
  \label{fig:main_fig_optimal_hps}
\end{figure*}

\clearpage
\subsection{Comparison at Optimal HPs}
\subsubsection{Optimal Hyperparameters by Endpoint}

\begin{table}[H]
    \centering
    \small
    \setlength{\tabcolsep}{9pt}
    \begin{tabular}{c cc ccc}
        \toprule
        & \multicolumn{2}{c}{Constant LR + Avg.} & \multicolumn{3}{c}{$1/\sqrt{t}$ + Avg.} \\
        \cmidrule(lr){2-3}\cmidrule(lr){4-6}
        Endpoint & $\eta$ & $\beta_2$ & $\eta$ & $\beta_2$ & $\alpha$ \\
        \midrule
        $1\times$  & 0.003 & 0.98 & 0.003 & 0.99 & 12,800 \\
        $2\times$  & 0.003 & 0.98 & 0.003 & 0.99 & 12,800 \\
        $4\times$  & 0.003 & 0.98 & 0.003 & 0.98 & 51,200 \\
        $8\times$  & 0.001 & 0.99 & 0.003 & 0.99 & 51,200 \\
        $16\times$ & 0.001 & 0.99 & 0.001 & 0.99 & 51,200 \\
        $32\times$ & 0.001 & 0.99 & 0.001 & 0.99 & 51,200 \\
        \bottomrule
    \end{tabular}
    \caption{Per-endpoint optimal hyperparameters for the 150M plots, where optimal means the lowest $4\%$ EMA validation loss among the configurations ran at each checkpoint. }
    \label{tab:optimal-hps-150m}
\end{table}

\begin{table}[H]
    \centering
    \small
    \setlength{\tabcolsep}{9pt}
    \begin{tabular}{c cc ccc}
        \toprule
        & \multicolumn{2}{c}{Constant LR + Avg.} & \multicolumn{3}{c}{$1/\sqrt{t}$ + Avg.} \\
        \cmidrule(lr){2-3}\cmidrule(lr){4-6}
        Endpoint & $\eta$ & $\beta_2$ & $\eta$ & $\beta_2$ & $\alpha$ \\
        \midrule
        $1\times$  & 0.003 & 0.95 & 0.003 & 0.95 & 12,800 \\
        $2\times$  & 0.003 & 0.95 & 0.003 & 0.95 & 12,800 \\
        $4\times$  & 0.001 & 0.99 & 0.001 & 0.99 & 51,200 \\
        $8\times$  & 0.001 & 0.99 & 0.001 & 0.99 & 51,200 \\
        $16\times$ & 0.001 & 0.99 & 0.001 & 0.99 & 51,200 \\
        \bottomrule
    \end{tabular}
    \caption{Per-endpoint optimal hyperparameters for the 300M plots, where optimal means the lowest $4\%$ EMA validation loss among the configurations ran at each checkpoint.}
    \label{tab:optimal-hps-300m}
\end{table}
\subsubsection{Inverse-Square-Root Offset Sensitivity}
\begin{figure*}[!htp]
    \centering
    \begin{subfigure}[t]{0.49\textwidth}
        \centering
        \includegraphics[width=\linewidth]{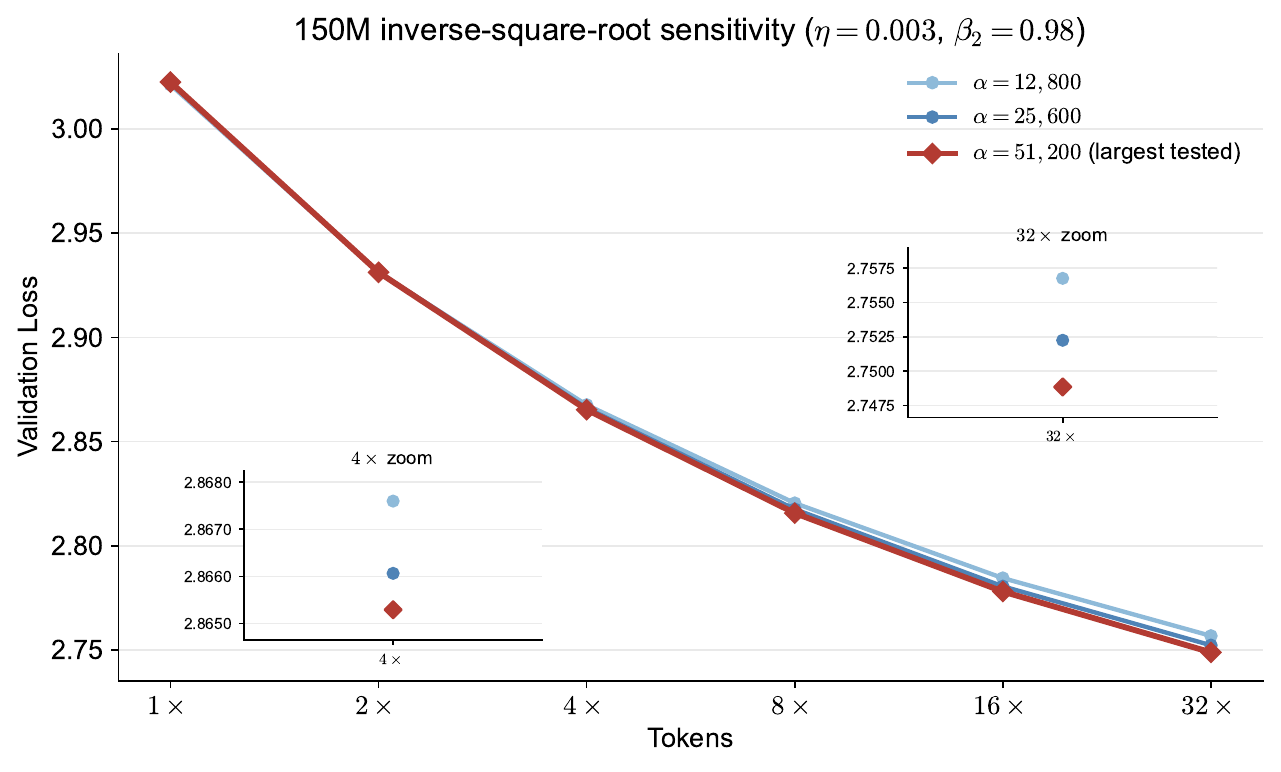}
    \end{subfigure}\hfill
    \begin{subfigure}[t]{0.49\textwidth}
        \centering
        \includegraphics[width=\linewidth]{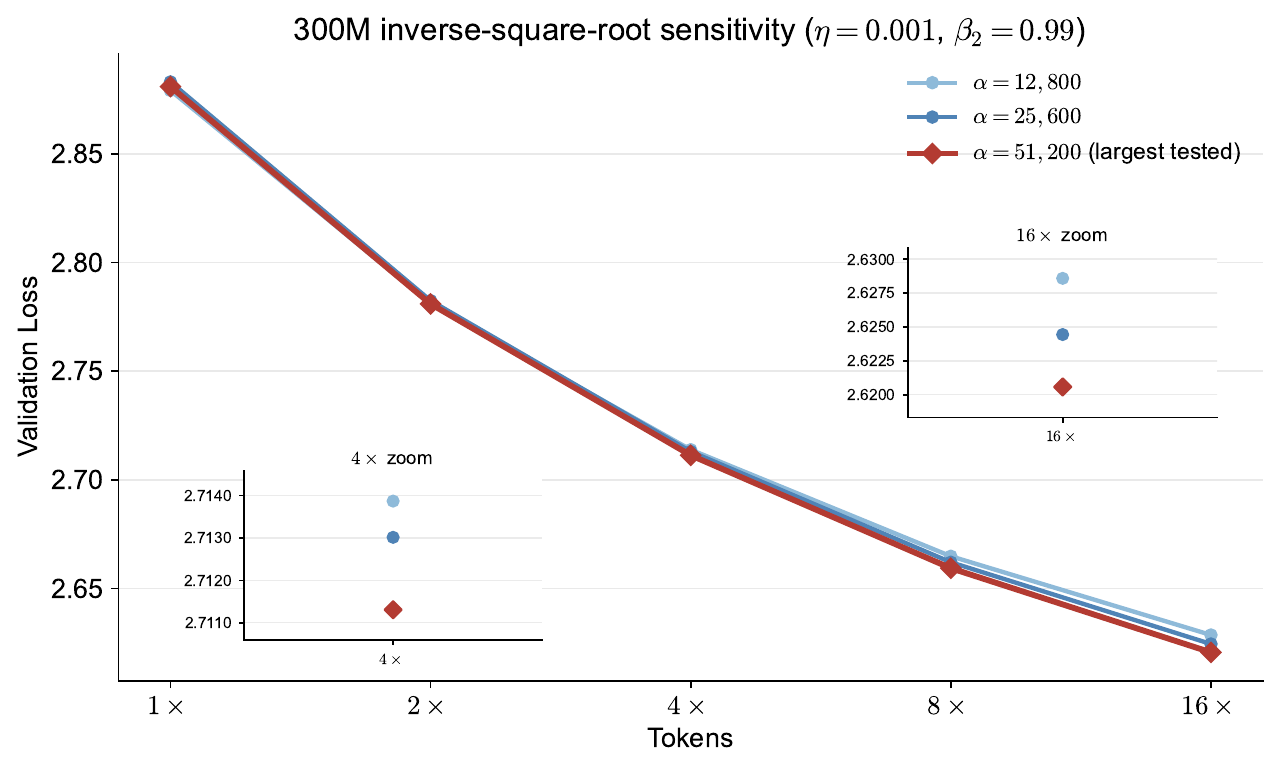}
    \end{subfigure}
    \caption{Sensitivity of the inverse-square-root offset $\alpha$ with every other hyperparameter fixed to the configuration selected at $4\times$. Left: the 150M model with $(\eta,\beta_2)=(0.003,0.98)$, evaluated from $1\times$ through $32\times$. Right: the 300M model with $(\eta,\beta_2)=(0.001,0.99)$, evaluated from $1\times$ through $16\times$. Lines connect exact checkpoint losses for $\alpha\in\{12{,}800,25{,}600,51{,}200\}$.}
    \label{fig:inverse-sqrt-alpha-sensitivity-4x}
\end{figure*}

\clearpage
\subsection{Figure~\ref{fig:large_batch} at optimal hyperparameters}
We similarly provide the losses from the experiment in Figure~\ref{fig:large_batch} for the optimal hyperparameters at each intermediate point.

\begin{figure*}[!htp]
    \centering
    \includegraphics[width=1.0\linewidth]{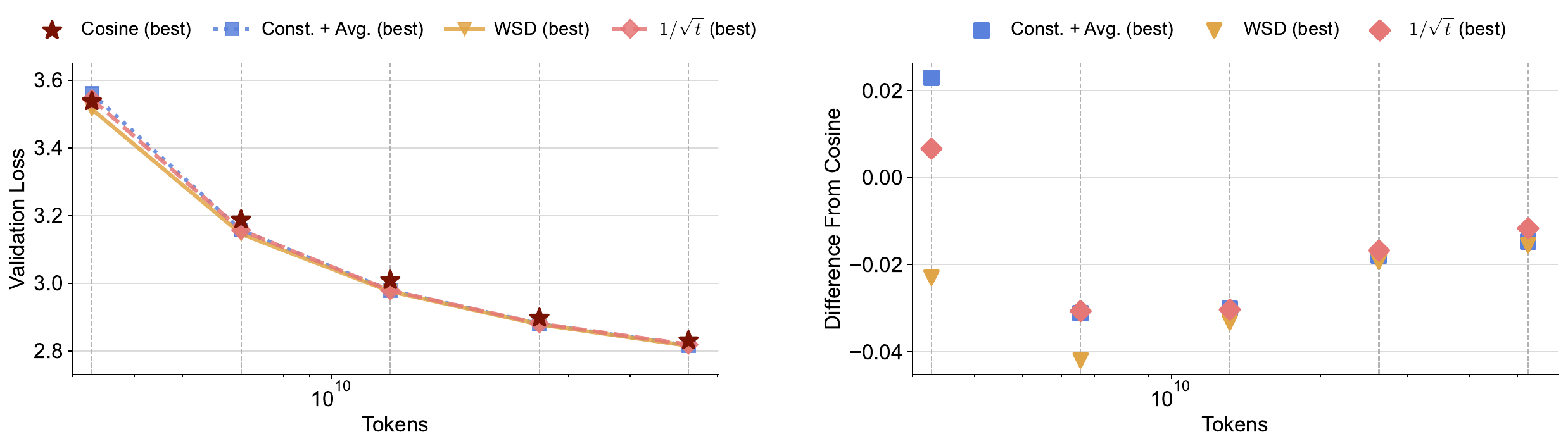}
    \caption{(Left) Validation loss comparison between cosine decay, constant learning rate with averaging, $1/\sqrt{t}$ with averaging and WSD on 150M models trained on $1 \times$ to $16\times$ Chinchilla size datasets, at batch size $4096$ with a 500-step warmup. (Right) The difference between the loss achieved by each schedule and cosine at each multiple of Chinchilla, with negative values meaning better than cosine. Traces are made by interpolating between the losses of the optimal hyperparameter runs at each of the intermediate points from $1\times$ to $16 \times$ Chinchilla.}
\label{fig:large_batch_optimal_hps}
\end{figure*}

\clearpage
\subsection{Synthetic sensitivity to label noise}
Figure~\ref{fig:noalpha_synthetic_0.01} repeats the exact-recursion experiment from Figure~\ref{fig:noalpha_synthetic_0.1} at a lower label-noise variance. This isolates the expected finite-horizon effect of the noise scale: reducing $\sigma^2$ delays the variance-dominated regime in which the faster asymptotic decay of $1/\sqrt{t}$ becomes visible.

\begin{figure*}[!htp]
    \centering
    \includegraphics[width=1.0\textwidth]{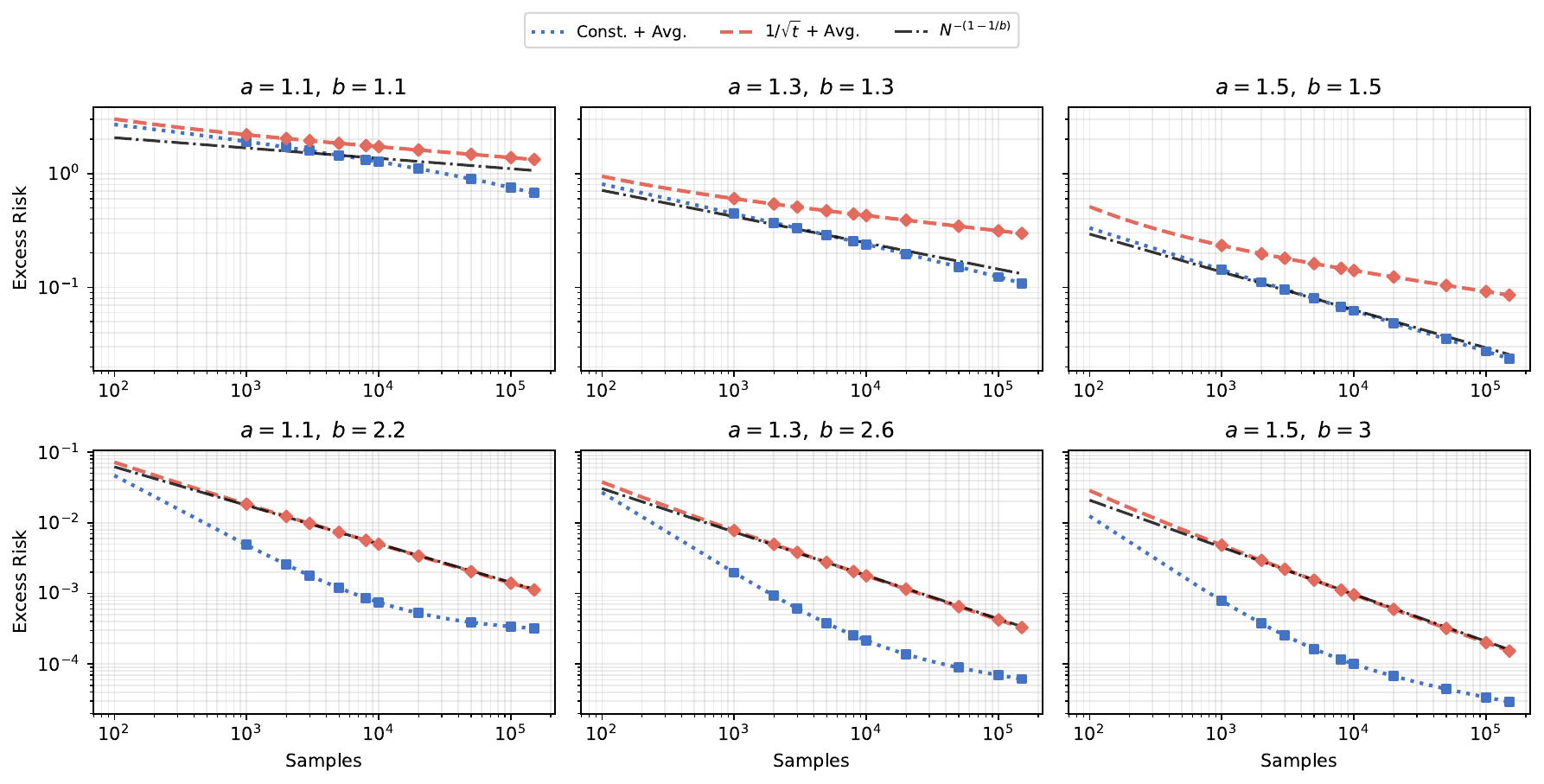}
    \caption{Similar setup as Figure~\ref{fig:noalpha_synthetic_0.1}, but with $\sigma^2 = 0.01$.}
    \label{fig:noalpha_synthetic_0.01}
\end{figure*}

\clearpage
\subsection{Optimal cosine envelope}
Figure~\ref{fig:cos-transfer} compares two ways of using cosine learning-rate schedules across token budgets from $1\times$ to $32\times$ Chinchilla.
First, we train \emph{separately tuned} cosine baselines for each budget and connect the best result at every horizon; this defines the \emph{optimal cosine envelope} (red curve).
Second, we tune a cosine schedule for a long run ($8\times$, $16\times$, or $32\times$) and evaluate it at earlier checkpoints along the same training trajectory.
Across both model sizes, these intermediate points lie substantially above the optimal envelope, showing that a cosine schedule tuned for a long horizon does not transfer well to shorter budgets.

Note that we plot again Figure~\ref{fig:cosine_envelope} to aid in side by side comparison. 

\begin{figure}[!htp]
  \centering
  \begin{subfigure}[t]{0.49\textwidth}
    \centering
    \includegraphics[width=\linewidth]{figures/cosine_transfer_left.pdf}
    \caption{150M}
    \label{fig:cos-transfer-150m}
  \end{subfigure}\hfill
  \begin{subfigure}[t]{0.49\textwidth}
    \centering
    \includegraphics[width=\linewidth]{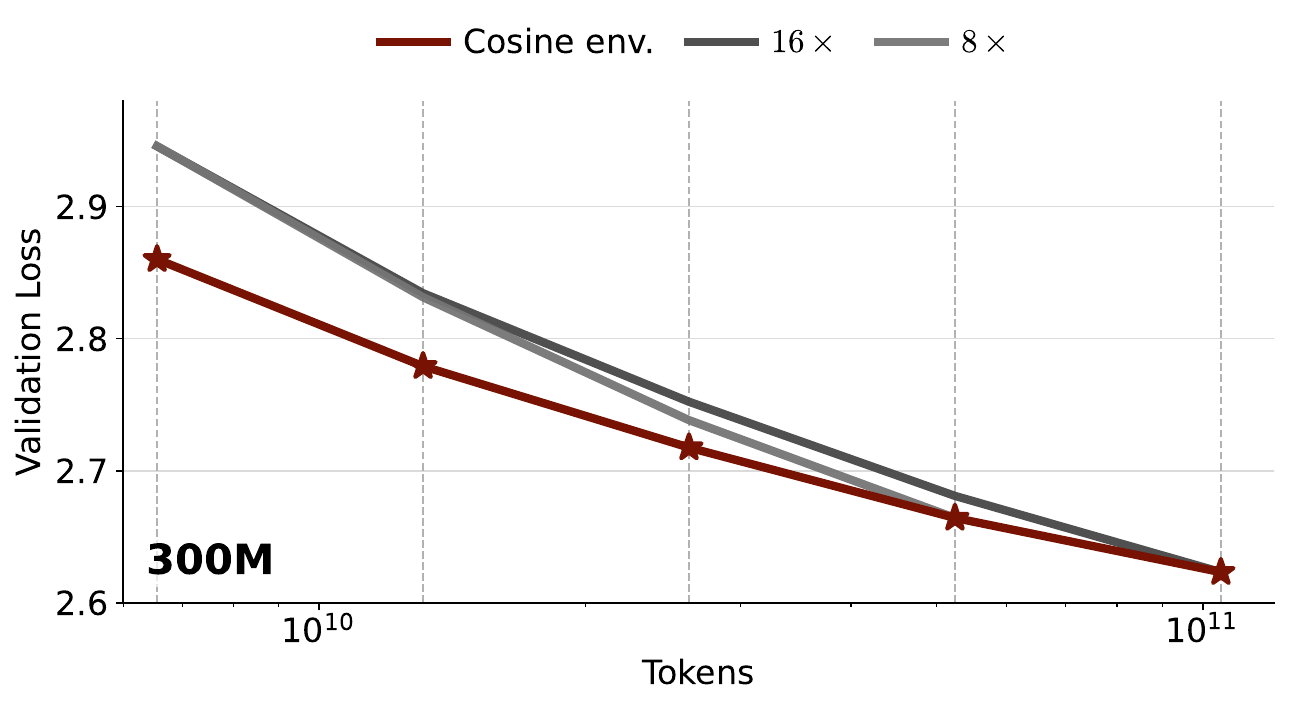}
    \caption{300M}
    \label{fig:cos-transfer-300m}
  \end{subfigure}
  \caption{
  Comparison between the optimal cosine envelope and checkpoints from cosine schedules tuned for longer budgets.
  The optimal envelope (red) is formed by independently tuning a cosine decay for each training horizon ($1\times$--$32\times$ Chinchilla) and taking the best validation value at that horizon.
  We also plot cosine schedules tuned for $8\times$, $16\times$, and $32\times$ and evaluated at smaller budgets using intermediate checkpoints from the same run.
  The gap to the envelope is substantial, indicating limited transfer from long-horizon cosine tuning to shorter budgets.
  In the 150M experiment, checkpoints were not logged exactly at $1\times$--$32\times$, and we plot the closest recorded validation point.
  }
  \label{fig:cos-transfer}
\end{figure}

\end{document}